\documentclass{article}

\PassOptionsToPackage{numbers,sort&compress}{natbib}
\usepackage[final]{neurips_2021}

\bibliographystyle{unsrtnat}

\usepackage[utf8]{inputenc} %
\usepackage[T1]{fontenc}    %
\usepackage{url}            %
\usepackage{booktabs}       %
\usepackage{amsfonts}       %
\usepackage{nicefrac}       %
\usepackage{microtype}      %
\usepackage{xcolor}         %
\usepackage{caption}
\usepackage{subcaption}
\usepackage{wrapfig}
\usepackage{multirow}
\usepackage{numprint}
\npthousandsep{\,}
\usepackage[percent]{overpic}

\usepackage{graphics}
\usepackage{mathtools}

\usepackage{algorithm}
\usepackage{algorithmic}
\usepackage{enumitem}

\usepackage{amsfonts}       %
\usepackage{amsmath}       %
\usepackage{amssymb}
\usepackage{amsthm}

\newcommand{\Fig}[1]{Figure~\ref{#1}}  %
\newcommand{\fig}[1]{Fig.~\ref{#1}}    %
\newcommand{\tab}[1]{Table~\ref{#1}}

\newcommand{\eqn}[1]{Eq.~\ref{#1}} %
\newcommand{\eqnp}[1]{(Eq.~\ref{#1})} %
\renewcommand{\sec}[1]{Sec.~\ref{#1}} %
\newcommand{\supp}[1]{Suppl.~\ref{#1}}

\usepackage{xspace}
\makeatletter
\DeclareRobustCommand\onedot{\futurelet\@let@token\@onedot}
\def\@onedot{\ifx\@let@token.\else.\null\fi\xspace}
\def\eg{e.g\onedot}
\def\ie{i.e\onedot}

\makeatother

\newcommand{\Real}{\ensuremath{\mathbb R}}        %
\DeclareMathOperator*{\argmin}{arg\,min}
\DeclareMathOperator*{\argmax}{arg\,max}
\newcommand{\Exp}{\ensuremath{\mathbb{E}}}        %
\newcommand{\dx}[1]{\ensuremath{\mathrm{d}#1}}                   %

\let\originalleft\left
\let\originalright\right
\renewcommand{\left}{\mathopen{}\mathclose\bgroup\originalleft}
\renewcommand{\right}{\aftergroup\egroup\originalright}

\newcommand{\g}[1]{%
  \ifthenelse{\equal{#1}{(}}
  {\left( }%
    { \ifthenelse{\equal{#1}{)}}
      { \right)}%
    { \ifthenelse{\equal{#1}{[}}
      {\left[}%
        { \ifthenelse{\equal{#1}{]}}
          { \right]}%
        {#1}}
    }
  }
}

\newcommand{\KL}[2]{\text{D}_{\text{KL}}\g(\left.#1\;\right\|\;#2\g)}
\newcommand{\KLapp}[1]{\text{D}_{\text{#1}}}

\newcommand{\indep}{\mathrel{\text{\raisebox{-0.05em}{\scalebox{1.07}{$\perp\mkern-11mu\perp$}}}}}
\newcommand{\nindep}{\mathrel{\text{\raisebox{-0.05em}{\scalebox{1.07}{$\not\mkern-2mu\perp\mkern-11mu\perp$}}}}}

\usepackage{xcolor}
\definecolor{ourblue}{rgb}{0.368,0.507,0.71}
\definecolor{ourorange}{rgb}{0.881,0.611,0.142}
\definecolor{ourgreen}{rgb}{0.56,0.692,0.195}
\definecolor{ourred}{rgb}{0.923,0.386,0.209}
\definecolor{ourviolet}{rgb}{0.528,0.471,0.701}
\definecolor{ourbrown}{rgb}{0.772,0.432,0.102}
\definecolor{ourlightblue}{rgb}{0.364,0.619,0.782}
\definecolor{ourdarkgreen}{rgb}{0.572,0.586,0.}
\definecolor{ourdarkblue}{RGB}{28,99,148}
\definecolor{ourdarkred}{RGB}{169,53,17}

\usepackage{etoolbox}
\makeatletter
\patchcmd{\NAT@test}{\else \NAT@nm}{\else \NAT@nmfmt{\NAT@nm}}{}{}
\DeclareRobustCommand\citeposs
  {\begingroup
   \let\NAT@nmfmt\NAT@posfmt%
   \NAT@swafalse\let\NAT@ctype\z@\NAT@partrue
   \@ifstar{\NAT@fulltrue\NAT@citetp}{\NAT@fullfalse\NAT@citetp}}
   
\let\NAT@orig@nmfmt\NAT@nmfmt
\def\NAT@posfmt#1{\NAT@orig@nmfmt{#1's}}
\makeatother

\setlength{\floatsep}{5pt plus 2.0pt minus 2.0pt}
\setlength{\textfloatsep}{15pt plus 2.0pt minus 4.0pt}

\usepackage{hyperref}
\hypersetup{
       colorlinks,
       linkcolor=ourdarkblue,
       urlcolor=ourdarkblue,
       citecolor=ourdarkred}

\pdfsuppresswarningpagegroup=1  %

\AtBeginDocument{\let\latexlabel\label}

\theoremstyle{definition} \newtheorem{definition}{Definition}
\theoremstyle{plain} \newtheorem{proposition}{Proposition}

\DeclareMathOperator*{\rank}{rank}
\DeclareMathOperator{\Tr}{Tr}
\newcommand{\colonequal}{\mathbin{\vcentcolon=}}
\newcommand{\Pa}{\ensuremath{\mathrm{Pa}}}
\newcommand{\pa}{\ensuremath{\mathrm{pa}}}
\newcommand{\Do}[2]{\ensuremath{{\mathrm{do}\g(#1 \colonequal #2\g)}}}
\newcommand{\CAI}{\ensuremath{C^j}\xspace}
\newcommand{\CAIP}{CAI-P\xspace}

\newcommand{\FPP}{\textsc{FetchPickAndPlace}\xspace}%
\newcommand{\FP}{\textsc{FetchPush}\xspace}
\newcommand{\FRT}{\textsc{Fetch\allowbreak RotTable}\xspace} %
\newcommand{\Slide}{\textsc{{1\kern-1pt DSlide}}\xspace}

\title{Causal Influence Detection \\for Improving Efficiency in Reinforcement Learning}

\author{%
  Maximilian Seitzer \\
  MPI for Intelligent Systems \\
  Tübingen, Germany \\
  \hspace{-1.5em}\texttt{maximilian.seitzer@tue.mpg.de}\hspace{-3em} \\
  \And
  \hspace{1.5em}Bernhard Sch\"olkopf \\
  \hspace{1.5em}MPI for Intelligent Systems \\
  \hspace{1.5em}Tübingen, Germany \\
  \hspace{1.7em}\texttt{bs@tue.mpg.de} \\
  \And
  \hspace{-0.3em}Georg Martius \\
  \hspace{-0.3em}MPI for Intelligent Systems \\
  \hspace{-0.3em}Tübingen, Germany \\
  \hspace{-0.35em}\texttt{georg.martius@tue.mpg.de} \\
}

\begin{document}

\maketitle
\begin{abstract}
Many reinforcement learning (RL) environments consist of independent entities that interact sparsely.
In such environments, RL agents have only limited influence over other entities in any particular situation.
Our idea in this work is that learning can be efficiently guided by knowing when and what the agent can influence with its actions.
To achieve this, we introduce a measure of \emph{situation-dependent causal influence} based on conditional mutual information and show that it can reliably detect states of influence.
We then propose several ways to integrate this measure into RL algorithms to improve exploration and off-policy learning.
All modified algorithms show strong increases in data efficiency on robotic manipulation tasks.
\end{abstract}

\section{Introduction}
\label{sec:introduction}
Reinforcement learning (RL) is a promising route towards versatile and dexterous artificial agents.
Learning from interactions can lead to robust control strategies that can cope with all the intricacies of the real world that are hard to engineer correctly.
Still, many relevant tasks such as object manipulation pose significant challenges for RL.
Although impressive results have been achieved using simulation-to-real transfer~\citep{andrychowicz2020:inhandlearning} or heavy physical parallelization~\citep{Kalashnikovetal2018:QTOpt}, training requires countless hours of interaction. 
Improving sample efficiency is thus a key concern in RL.
In this paper, we approach this issue from a causal inference perspective.

When is an agent in control of its environment?
An agent can only influence the environment by its actions.
This seemingly trivial observation has the underappreciated aspect that the causal influence of actions is \emph{situation dependent}.
Consider the simple scenario of a robotic arm in front of an object on a table.
Clearly, the object can only be moved when contact between the robot and object is made.
Generally, there are situations where immediate causal influence is possible, while in others, none is.
In this work, we formalize this situation-dependent nature of control and show how it can be exploited to improve the sample efficiency of RL agents.
To this end, we derive a measure that captures the causal influence of actions on the environment and devise a practical method to compute it.

Knowing when the agent has control over an object of interest is important both from a learning and an exploration perspective. 
The learning algorithm should pay particular attention to these situations because (i) the robot is initially rarely in control of the object of interest, making training inefficient, (ii) physical contacts are hard to model, thus require more effort to learn and (iii) these states are enabling manipulation towards further goals.
But for learning to take place, the algorithm first needs data that contains these relevant states.
Thus, the agent has to take its causal influence into account already during exploration.

We propose several ways in which our measure of causal influence can be integrated into RL algorithms to address both the exploration, and the learning side.
For exploration, agents can be rewarded with a bonus for visiting states of causal influence.
We show that such a bonus leads the agent to quickly discover useful behavior even in the absence of task-specific rewards.
Moreover, our approach allows to explicitly guide the exploration to favor actions with higher predicted causal impact.
This works well as an alternative to $\epsilon$-greedy exploration, as we demonstrate.
Finally, for learning, we propose an off-policy prioritization scheme and show that it reliably improves data efficiency.
Each of our investigations is backed by empirical evaluations in robotic manipulation environments and demonstrates a clear improvement of the state-of-the-art with the same generic influence measure.

\section{Related Work}
\label{sec:related_work}

The idea underlying our work is that an agent can only sometimes influence its surroundings.
This rests on two basic assumptions about the causal structure of the world.
The first is that the world consists of independent entities, in accordance with the principle of independent causal mechanisms (ICM)~\cite{Peters2017CausalInference}, stating that the world's generative process consists of autonomous modules.
The second assumption is that the potential influence that entities have over other entities is localized spatially and occurs sparsely in time.
We can see this as explaining the sparse mechanism shift hypothesis, which states that naturally occurring distribution shifts will be due to local mechanism changes~\cite{Schoelkopf2021CausalReprLearning}.
This is usually traced back to the ICM principle, \ie that interventions on one mechanism will not affect other mechanisms~\cite{Parascandolo2018ICM}.
But we argue that it is also due to the \emph{limited interventional range} of agents (or, more generally, physical processes), which restricts the breadth and frequency of mechanism-changes in the real world.
Previous work has used sparseness to learn disentangled representations~\cite{Bengio2020MetaTransfer,Locatello2020WeaklySupDis}, causal models~\cite{Ke2019NeuralCausalModels}, or modular architectures~\cite{Goyal2021RIM}.
In the present work, we show that taking the localized and sparse nature of influence into account can also strongly improve RL algorithms.

Detecting causal influence, informally, means deciding whether changing a causal variable would have an impact on another variable.
This involves causal discovery, that is, finding the existence of arrows in the causal graph~\cite{Pearl2009Causality}.
While the task of causal discovery is unidentifiable in general \cite{Spirtes2000}, there are assumptions which permit discovery \cite{PetMooJanSch11}, in particular in the time series setting we are concerned with \cite{Eichler2012CausalInf}.
Even if the existence of an arrow is established, the problem remains of quantifying its causal impact, for which various measures such as transfer entropy or information flow have been proposed~\cite{Schreiber2000TransferEnt,Ay2007GeometricRobustness,Ay2008InformationFlow,Lizier2013InfoDynamics,Janzing2013CausalInf}.
We compare how our work relates to these measures in \sec{subsec:measuring_cai}.

The intersection of RL and causality has been the subject of recent research~\cite{Bareinboim2015,Lu2018DeconfoundingRL, Rezende2020CausalPartialModels,Buesing2019WouldaCS,Bareinboim2019NearOptimalRL}.
Close to ours is the work of~\citet{Pitis2020CFDataAug}, who also use influence detection, albeit to create counterfactual data that augments the training of RL agents.
In \sec{sec:empirical_eval}, we find that our approach to action influence detection performs better than their heuristic approach.
Additionally, we demonstrate that influence detection can also be used to help agents explore better.
To this end, we use influence as a type of intrinsic motivation. 
For exploration, various signals have been proposed, \eg model surprise~\cite{Schmidhuber1991Curious,Pathak2017ICM,BlaesVlastelicaZhuMartius2019:CWYC}, learning progress~\cite{Colas2018:CURIOUS,BlaesVlastelicaZhuMartius2019:CWYC}, empowerment~\cite{Klyubin2005Empowerment,Mohamed2015Empowerment}, information gain~\cite{Storck95:RL-infogain, LittleSommer2013:PIG,Houthooft2016VIME}, or predictive information~\cite{MartiusDerAy2013, ZahediMartiusAy2013}.
Inspired by causality, \citet{Sontakke2020CausalCuriosity} introduce an exploration signal that leads agents to experiment with the environment to discover causal factors of variation.
In concurrent work, \citet{Zhao2021MUSIC} propose to use mutual information between the agent and the environment state for exploration.
As in our work, the agent is considered a separate entity from the environment.
However, their approach does not discriminate between individual situations the agent is in.
Causal influence is also related to the concept of contingency awareness from psychology~\cite{Watson1966Contingency}, that is, the knowledge that one's actions can affect the environment.
On Atari games, exploring through the lens of contingency awareness has led to state-of-the-art results~\cite{Choi2018ContingencyAware,Song2020Mega}.

\section{Background}
\label{sec:background}

We are concerned with a Markov decision process $\langle \mathcal{S}, \mathcal{A}, P, r, \gamma \rangle$ consisting of state and action space, transition distribution, reward function and discount factor.\footnote{We use capital letters (\eg $X$) to denote random variables, small letters to denote samples drawn from particular distributions (\eg $x \sim P_X$), and caligraphy letters to denote graphs, sets and sample spaces (\eg $x \in \mathcal{X}$). We denote distributions with $P$ and their densitites with $p$.}
Most real world environments consist of entities that behave mostly independently of each other.
We model this by assuming a known state space factorization $\mathcal{S} = \mathcal{S}_1 \times  \ldots \times \mathcal{S}_N$, where each $\mathcal{S}_i$ corresponds to the state of an entity.

\begin{figure}\centering
    \begin{subfigure}[b]{0.39\textwidth}
        \includegraphics[width=\textwidth]{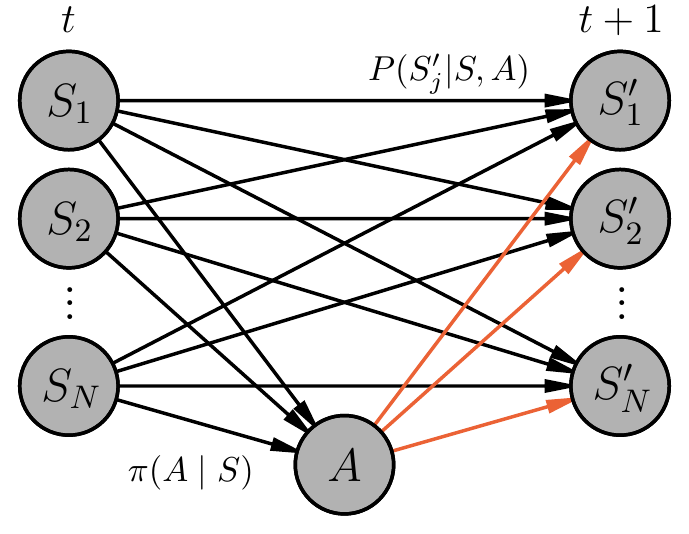}
        \caption{Causal Graph $\mathcal{G}$}
        \label{subfig:causal_graph}
    \end{subfigure}
    \quad
    \begin{minipage}[b]{0.45\textwidth}
    \begin{subfigure}[b]{\textwidth}
        \begin{center}
        \includegraphics[width=0.47\textwidth]{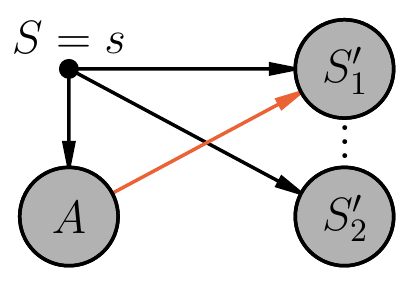}%
        \hfill
        \raisebox{.2\height}{\includegraphics[width=0.45\textwidth]{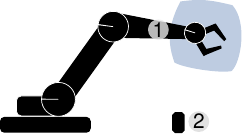}}
        \vspace*{-0.5em}
        \caption[skip=0]{No influence of $A$ on $S_2'$}
        \label{subfig:causal_graph_no_influence}
        \end{center}
    \end{subfigure}
    \begin{subfigure}[b]{\textwidth}
        \begin{center}
        \includegraphics[width=0.47\textwidth]{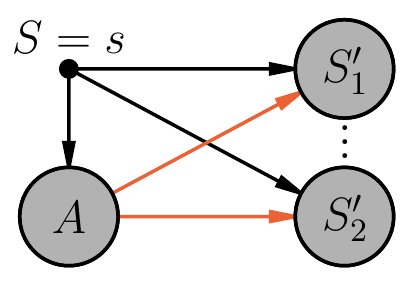}%
        \hfill
        \raisebox{.2\height}{\includegraphics[width=0.45\textwidth]{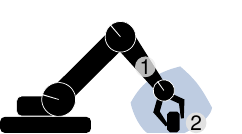}}
        \vspace*{-0.5em}
        \caption{Influence of $A$ on $S_1'$ and $S_2'$}
        \label{subfig:causal_graph_influence}
        \end{center}
    \end{subfigure}
    \end{minipage}
    \vspace{-.3em}
    \caption{Causal graphical model capturing the environment transition from state $S$ to $S'$ by action $A$, factorized into state components.
    (a): Viewed globally over all time steps, all components of the state and the action can influence all state components at the next time step.
    (b, c): Given a situation $S=s$, some influences may or may not not hold in the \emph{local causal graph} $\mathcal{G}_{S=s}$.
    In this paper, our aim is to detect which influence the action has on $S'$, \ie the presence of the red arrows.}
    \label{fig:causal_graph}
\end{figure}

\subsection{Causal Graphical Models}
\label{subsec:cgm}

We can model the one-step transition dynamics at time step $t$ using a \emph{causal graphical model} (CGM)~\cite{Pearl2009Causality,Peters2017CausalInference} over the set of random variables $\mathcal V=\{S_1, \ldots, S_N, A, S'_1, \ldots, S'_N\}$, consisting of a directed graph $\mathcal{G}$ (see \fig{subfig:causal_graph}) and a conditional distribution $P(V_i \mid \Pa(V_i))$ for each node $V_i \in \mathcal V$, where $\Pa(V_i)$ is the set of parents of $V_i$ in the causal graph.
We assume that the joint distribution $P_\mathcal{V}$ is Markovian with respect to the graph~\cite[Def. 6.21 (iii)]{Peters2017CausalInference}, that is, its density exists and factorizes as
\begin{align}\label{eq:causal_markov_condition}
p\g(v_1, \ldots, v_{|\mathcal V|}\g) = \prod_{i=1}^{|\mathcal V|} p\g(v_i \mid \Pa(V_i)\g).
\end{align}
In a CGM, we can model a (stochastic) intervention $\Do{V_i}{q\g(v_i \mid \text{Pa}(V_i)\g)}$ on variable $V_i$ by replacing its conditional $p\g(v_i \mid \Pa(V_i)\g)$ in \eqn{eq:causal_markov_condition} with the distribution $q\g(v_i \mid \Pa(V_i)\g)$~\cite{Peters2017CausalInference}.
Here, $V_i$ could \eg be a state component $S_i$, or the agent's action $A$.
Thus, whereas a probabilistic graphical model represents a single distribution, a CGM represents a set of distributions~\cite{Schoelkopf2021CausalReprLearning}.

The causal graph that we assume is shown in \fig{subfig:causal_graph}.
Within a time step, there are no edges, \ie no instantaneous effects, except for the action which is computed by the policy $\pi(A \mid S)$.
Between time steps, the graph is fully connected.
The reason is that whenever an interaction between two components $S_i$ and $S_j$, however unlikely, is possible, it is necessary to include an arrow $S_i \rightarrow S_j'$ (and vice versa).
Nevertheless, during most concrete time steps, there should be \emph{no} interaction between entities, reflecting the assumption that the state components represent independent entities in the world.
In particular, the agent's ``sphere of influence'' (depicted in blue in Figs.~\ref{subfig:causal_graph_no_influence} and \ref{subfig:causal_graph_influence}) is limited -- its action $A$ can only sparsely affect other entities.
Thus, in this paper, we are interested in inferring the influence the action has in a specific state configuration $S=s$, that is, the \emph{local causal model} in $s$. %

\begin{definition}\label{def:local_cgm}
 Given a CGM with distribution $P_\mathcal{V}$ and graph $\mathcal{G}$, we define the \emph{local CGM} induced by observing $X=x$ with $X \subset \mathcal{V}$ to be the CGM with joint distribution $P_{\mathcal{V} \mid X=x}$ and the graph $\mathcal{G}_{X=x}$ resulting from removing edges from $\mathcal{G}$ until $P_{\mathcal{V} \mid X=x}$ is causally minimal with respect to the graph.
\end{definition}
Causal minimality tells us that each edge $X \rightarrow Y$ in the graph must be ``active'', in the sense that $Y$ is conditionally dependent on $X$ given all other parents of $Y$~\cite[Prop.~6.36]{Peters2017CausalInference}.

\subsection{The Cause of an Effect}

When is an agent's action $A=a$ the cause of an outcome $B=b$?
Answering this question precisely is surprisingly non-trivial and is studied under the name of \emph{actual causation}~\citep{Pearl2009Causality,Halpern2016ActualCausality}.
Humans would answer by contrasting the actual outcome to some normative world in which $A=a$ did not happen, \ie they would ask the counterfactual question ``What would have happened normally to $B$ without $A=a$?''~\cite{Halpern2016ActualCausality}.
Algorithmitizing this approach poses certain problems.
First, it requires a ``normal'' outcome which can be difficult to compute as it depends on the behavior of the different actors in the world.
Second, it requires to actually observe the world's state without the agents interference.
Such a ``no influence'' action may not be available for every agent.
Instead, we are inspired by an alternative approach, the so-called \emph{``but-for''} test: ``$B=b$ would not have happened but for $A=a$.''
In other words, $A=a$ was a necessary condition for $B=b$ to occur, and under a different value for $A$, $B$ would have had a different value as well.
This matches well with an algorithmic view on causation: $A$ is a cause of $B$ if the value of $A$ is required to determine the value of $B$~\cite{Pearl2016CausalInf}.

The but-for test yields potentially counterintuitive assessments.
Consider a robotic arm close to an object but performing an action that moves it away from the object.
Then this action is considered a cause for the position of the object in this step, as an alternative action touching the object would have led to a different outcome.
Algorithmically, knowing the action is required to determine what happens to the object -- all actions are considered to be a cause in this situation.
Importantly, this implies that we cannot differentiate whether individual actions are causes or not, but can only identify whether or not the agent has causal influence on other entities in the current state.

\section{Causal Influence Detection}
\label{sec:method}

As the previous discussion showed, having causal influence is dependent on the situation the agent is in, rather than the chosen actions.
We characterize this as the agent being \emph{in control}, analogous to similar notions in control theory~\cite{Touchette2004InfoControlSystems}.
Formally, using the causal model introduced in \sec{sec:background}, \emph{we define the agent to be in control of $S_j'$ in state $S=s$ if there is an edge $A \rightarrow S_j'$ in the local causal graph $\mathcal{G}_{S=s}$ under all interventions $\Do{A}{\pi(a | s)}$ with $\pi$ having full support.}
The following proposition states when such an edge exists (proofs in \supp{app:proofs_propositions}).

\begin{proposition}\label{prop:independence}
Let $\mathcal{G}_{S=s}$ be the graph of the local CGM induced by $S = s$.
There is an edge $A \rightarrow S_j'$ in $\mathcal{G}_{S=s}$ under the intervention $\Do{A}{\pi(a | s)}$ if and only if  ${S_j' \nindep A \mid S = s}$.
\end{proposition}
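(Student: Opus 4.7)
The plan is a straightforward ``if and only if'' argument that leans on the two ingredients the excerpt has supplied: the definition of the local CGM via causal minimality (Definition~\ref{def:local_cgm}), and the structural fact, read off from \fig{subfig:causal_graph}, that the parents of $S_j'$ lie within $\{S_1,\dots,S_N,A\}$, \ie there are no instantaneous edges between state components at time $t{+}1$. I would state this structural observation once at the start so that both directions can reuse it.

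For the forward direction, suppose the edge $A\to S_j'$ is present in $\mathcal{G}_{S=s}$. Since the local CGM is causally minimal with respect to its graph, Prop.~6.36 of \citet{Peters2017CausalInference} (cited in the excerpt) guarantees that $S_j' \nindep A \mid \Pa(S_j')\setminus\{A\}$ in the distribution $P_{\mathcal V\mid S=s}$ under the intervention $\Do{A}{\pi(a\mid s)}$. By the structural observation, $\Pa(S_j')\setminus\{A\}\subseteq\{S_1,\dots,S_N\}$, and conditioning on $S=s$ already fixes all of these variables. Hence the conditional dependence statement transports verbatim to $S_j' \nindep A \mid S=s$.

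For the reverse direction, I would argue by contraposition: assume there is no edge $A\to S_j'$ in $\mathcal{G}_{S=s}$. Then $\Pa(S_j')\subseteq\{S_1,\dots,S_N\}$, and since edges only point forward in time, $A$ is a non-descendant of $S_j'$. By the local Markov property (which holds because $P_{\mathcal V\mid S=s}$ is Markovian with respect to $\mathcal{G}_{S=s}$, and the intervention on $A$ preserves this), $S_j'$ is independent of the set of its non-descendants given $\Pa(S_j')$, so in particular $S_j' \indep \{A\}\cup(S\setminus\Pa(S_j'))\mid \Pa(S_j')$. Weak union then yields $S_j' \indep A \mid S$, which in turn gives $S_j' \indep A \mid S=s$, contradicting the hypothesis.

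The only step I expect to require care is the bookkeeping in the reverse direction: going from the independence given $\Pa(S_j')$ (which is what the Markov condition delivers) to the independence given the full state $S$ that appears in the proposition. This is clean as long as I invoke the local Markov statement about the joint set of non-descendants (rather than just $A$ alone) and then apply weak union, so that the extra conditioning variables $S\setminus\Pa(S_j')$ are correctly absorbed; otherwise one could mistakenly think that conditioning on a larger set might reintroduce dependence. Everything else is either a direct appeal to causal minimality or an immediate consequence of the graph structure in \fig{subfig:causal_graph}.
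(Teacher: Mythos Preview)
Your proof is correct and essentially mirrors the paper's. The ``edge implies dependence'' direction is identical: both you and the paper invoke causal minimality (Prop.~6.36 of \citet{Peters2017CausalInference}) and the fact that the other parents of $S_j'$ are contained in $S$. For the converse, the paper argues directly via the global Markov property (d-separation): dependence forces an unblocked path, and since conditioning on $S$ blocks all indirect paths and there are no instantaneous effects, only the direct edge $A\to S_j'$ remains. You instead argue the contrapositive via the local Markov property plus weak union to absorb the extra conditioning variables $S\setminus\Pa(S_j')$. These are two standard, equivalent ways to extract the same independence from the graph; your version is slightly more explicit about the bookkeeping, while the paper's path-blocking phrasing is terser.
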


To detect when the agent is in control, we can intervene with a policy.
The following proposition gives conditions under which conclusions drawn from one policy generalize to many policies.

\begin{proposition}\label{prop:detection}
If there is an intervention $\Do{A}{\pi(a | s)}$ under which ${S_j' \nindep A \mid S = s}$, this dependence holds under \emph{all} interventions with full support, and the agent is in control of $S_j'$ in $s$.
If there is an intervention $\Do{A}{\pi(a | s)}$ with $\pi$ having full support under which ${S_j' \indep A \mid S = s}$, this independence holds under \emph{all} possible interventions and the agent is not in control of $S_j'$ in $s$.
\end{proposition}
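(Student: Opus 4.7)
The plan is to reduce both parts of the proposition to a single observation: under an intervention $\Do{A}{\pi(a \mid s)}$, only the conditional for $A$ changes, so the mechanism $p(s_j' \mid s, a)$ governing $S_j'$ is invariant across all such interventions (it is determined by the factorization in \eqn{eq:causal_markov_condition} and is left untouched when we replace $p(a \mid s)$ with $\pi(a \mid s)$). Consequently, for any intervention policy $\pi$, the joint distribution of $(A, S_j')$ given $S=s$ factors as $\pi(a \mid s)\, p(s_j' \mid s, a)$, and the dependence structure between $A$ and $S_j'$ given $S=s$ is entirely controlled by how $p(s_j' \mid s, a)$ varies in $a$ over the support of $\pi$.

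For the first part, I would argue contrapositively through this observation. If $S_j' \nindep A \mid S = s$ under some intervention $\pi_0$, then $p(s_j' \mid s, a)$ cannot be a function of $s$ alone on the support of $\pi_0$; in particular there exist $a_1, a_2$ in that support (hence in $\mathcal{A}$) with $p(\cdot \mid s, a_1) \neq p(\cdot \mid s, a_2)$. For any other full-support policy $\pi'$, both $a_1$ and $a_2$ lie in its support, so $\pi'(a \mid s)\, p(s_j' \mid s, a)$ still fails to factor into a product of a function of $a$ and a function of $s_j'$, giving $S_j' \nindep A \mid S=s$ under $\pi'$. Proposition~\ref{prop:independence} then provides the edge $A \to S_j'$ in $\mathcal{G}_{S=s}$ under every full-support intervention, which is exactly the definition of the agent being in control of $S_j'$ in $s$.

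For the second part, I would start from a full-support $\pi_0$ satisfying $S_j' \indep A \mid S = s$. Independence means $p(s_j' \mid s, a) = p_{\pi_0}(s_j' \mid s)$ for $\pi_0$-almost every $a$, and since $\pi_0$ has full support this equality extends to all $a \in \mathcal{A}$. Hence the mechanism $p(s_j' \mid s, a)$ is constant in $a$, call it $g(s)$. For an arbitrary intervention $\pi$, the joint $\pi(a \mid s)\, p(s_j' \mid s, a) = \pi(a \mid s)\, g(s)$ factorizes, yielding $S_j' \indep A \mid S = s$ under every $\pi$. By Proposition~\ref{prop:independence}, no full-support intervention produces the edge $A \to S_j'$, so the agent is not in control of $S_j'$ in $s$.

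The only technical obstacle is justifying that the mechanism $p(s_j' \mid s, a)$ is truly invariant across interventions $\Do{A}{\pi}$ and that conclusions drawn at the density level transfer cleanly to the independence statements; I would handle this by invoking the intervention semantics from \sec{subsec:cgm} (replace the conditional for the intervened node, keep all others), and be careful, in the full-support step of Part 2, to note that equalities holding $\pi_0$-almost surely extend everywhere because $\pi_0$ assigns positive probability to every measurable subset of $\mathcal{A}$. Everything else is bookkeeping.
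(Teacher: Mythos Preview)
Your proposal is correct and follows essentially the same route as the paper's proof: both parts hinge on the invariance of the mechanism $p(s_j' \mid s, a)$ across interventions on $A$ (what the paper cites as the autonomy property), then reduce dependence/independence to whether this mechanism varies in $a$ on the relevant support, and finish via Proposition~\ref{prop:independence}. One cosmetic remark: in Part~1 you say ``contrapositively'' but your argument is in fact direct (find $a_1, a_2$ witnessing dependence, note they lie in every full-support policy); the paper argues the same way.
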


\subsection{Measuring Causal Action Influence}
\label{subsec:measuring_cai}

Our goal is to find a state-dependent quantity that measures whether the agent is in control of $S_j'$.
As Prop.~\ref{prop:independence} tells us, control (or its absence) is linked to the independence $S_j' \indep A \mid S = s$.
A well-known measure of dependence is the conditional mutual information (CMI)~\citep{Cover2006Elements} which is zero for independence.
We thus propose to use (pointwise) CMI as a measure of \emph{causal action influence} (CAI) that can be thresholded to get a classification of control (see \supp{app:cmi_derivation} for a derivation):
\begin{align}
    \CAI(s) \colonequal I(S_j'; A \mid S = s) = \Exp_{a \sim \pi} \g[ \KL{P_{S_j' \mid s, a}}{P_{S_j' \mid s}} \g]. \label{eq:cai}
\end{align}
We want this measure to be independent of the particular policy used in the joint distribution $P(S, A, S')$.
This is because we might not be able to sample from or evaluate this policy (\eg in off-policy RL, the data stems from a mixture of different policies).
Fortunately, Prop.~\ref{prop:detection} shows that to detect control, it is sufficient to demonstrate (in-)dependence for a single policy with full support.
Thus, we can choose a uniform distribution over the action space as the policy: $\pi(A) \colonequal \mathcal{U}(\mathcal{A})$.

Let us discuss how CAI relates to previously suggested measures of (causal) influence.
\emph{Transfer entropy}~\cite{Schreiber2000TransferEnt} is a non-linear extension of Granger causality~\cite{Granger1969Causality} quantifying causal influence in time series under certain conditions.
CAI is similar to a one-step, local transfer entropy~\cite{Lizier2013InfoDynamics} with the difference that CAI conditions on the full state $S$.
\citet{Janzing2013CausalInf} put forward a measure of \emph{causal strength} fulfilling several natural criteria that other measures, including transfer entropy, fail to satisfy.
In \supp{app:proof_causal_strength}, we show that CAI is a pointwise version of \citeauthor{Janzing2013CausalInf}'s causal strength, for policies not conditional on the state $S$ (adding further justification for the choice of a uniform random policy).
Furthermore, we can relate CAI to notions of \emph{controllability}~\cite{Touchette2004InfoControlSystems}.
Decomposing $\CAI(s)$ as ${H(S_j' \mid s)} - H(S_j' \mid A, s)$, where $H$ denotes the conditional entropy~\cite{Cover2006Elements}, we can interpret CAI as quantifiying the degree to which $S_j'$ can be controlled in $s$, accounting for the system's intrinsic uncertainty that cannot be reduced by the action.

In the context of RL, \emph{empowerment}~\cite{Klyubin2005Empowerment,Salge2014Empowerment,Mohamed2015Empowerment} is a well-known quantity used for intrinsically-motivated exploration that leads agents to states of maximal influence over the environment.
Empowerment, for a state $s$, is defined as the channel capacity between action and a future state, which coincides with $\max_\pi C(s)$ for one-step empowerment.
CAI can thus be seen as a non-trivial lower bound of empowerment that is easier to compute.
However, CAI differs from empowerment in that it does not treat the state space as monolithic and is specific to an entity.
In \sec{subsec:app_exploration_bonus}, we demonstrate that an RL agent maximizing CAI quickly achieves control over its environment.

\subsection{Learning to Detect Control}
\label{subsec:learning_control}

Estimating CMI is a hard problem on many levels: it involves computing high dimensional integrals, representing complicated distributions and having access to limited data; strictly speaking, each conditioning point $s$ is seen only once in continuous spaces.
In practice, one thus has to resort to an approximation.
Non-parametric estimators based on nearest neighbors~\cite{Kraskov2004MI,Runge2018CondIndTest} or kernels methods~\cite{Kandasamy2015NonparamEst} are known to not scale well to higher dimensions~\cite{Gao2015EstimatingMI}.
Instead, we approach the problem by learning neural network models with suitable simplifying assumptions.

Expanding the KL divergence in \eqn{eq:cai}, we can write CAI as
\begin{align}
 \CAI(s) = I(S_j'; A \mid S = s) &= \Exp_{A \mid s} \Exp_{S_j' \mid s, a} \left[ \log \frac{p(s_j' \mid s, a)}{\int p(s_j' \mid s, a)\,\pi(a) \dx{a}} \right]
 \label{eq:cai2}
\end{align}
To compute this term, we estimate the transition distribution $p(s_j' \mid s, a)$ from data.
We then approximate the outer expectation and the transition marginal $p(s_j' \mid s)$ by sampling $K$ actions from the policy $\pi$.
This gives us the estimator
\begin{align}
\hat{\CAI}(s) = \frac{1}{K} \sum_{i=1}^K \left[ \KL{p(s_j' \mid s, a^{(i)})}{\frac{1}{K} \sum_{k=1}^K p(s_j' \mid s, a^{(k)})} \right], \label{eq:cmi_approx}
\end{align}
with $\{ a^{(1)}, \ldots, a^{(K)} \} \stackrel{\text{iid}}{\sim} \pi$.
Here, we replaced the infinite mixture $p(s_j' \mid s)$ with a finite mixture, $p(s_j' \mid s) \approx \frac{1}{K} \sum_{i=1}^K p\g(s_j' \mid s, a^{(i)}\g)$, and used Monte-Carlo to approximate the expectation.
\citet{Poole2019VarBoundsMI} show that this estimator is a lower bound converging to the true mutual information $I(S_j'; A \mid S = s)$ as $K$ increases (assuming, however, the true density $p(s_j' \mid s, a)$).

To compute the KL divergence itself, we make the simplifying assumption that the transition distribution $p(s_j' \mid s, a)$ is normally distributed given the action, which is reasonable in the robotics environment we are targeting.
This allows us to estimate the KL without expensive MC sampling by using an approximation for mixtures of Gaussians from~\citet{Durrieu2012LowerUpperKL}.
We detail the exact formula we use in \supp{app:approx_kl_gaussians}.

With the normality assumption, the density itself can be learned using a probabilistic neural network and simple maximum likelihood estimation.
That is, we parametrize $p(s_j' \mid s, a)$ as $\mathcal{N}(s_j'; \mu_\theta(s, a), \sigma^2_\theta(s, a))$, where $\mu_\theta, \sigma^2_\theta$ are the outputs of a neural network $f_\theta(s, a)$.
We find the parameters $\theta$ by minimizing the negative log-likelihood over samples $\mathcal{D}=\{ (s^{(i)}, a^{(i)}, s'^{(i)}) \}_{i=1}^N$ collected by some policy (the univariate case shown here also extends to the multivariate case):
\begin{align}
    \theta^\ast = \argmin_\theta \frac{1}{N} \sum_{i=1}^N \frac{\big( s'^{(i)}_j - \mu_\theta\big(s^{(i)}, a^{(i)}\big)\big)^2}{2 \sigma_\theta^2\big(s^{(i)}, a^{(i)}\big)} + \frac{1}{2} \log \sigma_\theta^2\big(s^{(i)}, a^{(i)}\big). \label{eq:log_likelihood}
\end{align}
There are some intricacies regarding the policy that collects the data for model training and the sampling policy $\pi$ that is used to compute CAI.
First of all, the two policies need to have overlapping support to avoid evaluating the model under actions never seen during training.
Furthermore, if the data policy is different from the sampling policy $\pi$, the model is biased to some degree.
This suggests to use $\pi$ for collecting the data; however, as we use a random policy, this will not result in interesting data in most environments.
The bias can be reduced by sampling actions from $\pi$ during data collection with some probability and only train on those.
In practice, however, we find to obtain better performing models by training on all data despite potentially being biased.

\section{Empirical Evaluation of Causal Influence Detection}
\label{sec:empirical_eval}

In this section, we evaluate the quality of our proposed causal influence detection approach in relevant environments.
As a simple test case, we designed an environment (\Slide) in which the agent must slide an object to a goal location by colliding with it.
Furthermore, we test on the \FPP environment from OpenAI Gym~\cite{Brockman2016OpenAIG}, in its original setting and when adding Gaussian noise to the observations to simulate more real-world conditions.
In both environments, the target variables of interest are the coordinates of the object.
Note that we need the true causal graph at each time step for the evaluation.
For \Slide, we derive this information from the simulation.
For the pick and place environment with its non-trivial dynamics, we resort to a heuristic of the possible movement range of the robotic arm in one step.
Detailed information about the setup is provided in Suppls.~\ref{app:environments} and \ref{app:setup_influence_eval}.

For our method, we use CAI estimated according to \eqn{eq:cmi_approx} (with $K=64$) as a classification score that is thresholded to gain a binary decision.
We compare with a recently proposed method~\cite{Pitis2020CFDataAug} that uses the attention weights of a Transformer model~\cite{Vaswani2017Transformer} to model influence.
Moreover, we compare with an \emph{Entropy} baseline that uses $H(S'_j \mid s)$ as a score and a \emph{Contact} baseline based on binary contact information from the simulator.
We show the test results over 5 random seeds in \tab{tab:inf_eval_results} and \fig{fig:inf_eval_roc_curves}.
We observe that CAI is able to reliably detect causal influence and no other baseline is able to do so. %
When increasing the observation noise, the performance drops gracefully for CAI as shown in \fig{fig:inf_eval_roc_curves}c.
\supp{app:add_results_inf_eval} contains more experimental results, including a visualization of CAI's behavior.

\begin{figure}
  \begin{center}
    \begin{tabular}{@{}c@{\hskip 1em}c@{\hskip 1em}c@{}}
    \hspace{1.2em}(a) \small\Slide &\hspace{1.2em}(b) \small\FPP & \hspace{1.2em}(c) \small\FPP\\[-0.2em]
    \includegraphics[width=0.30\textwidth]{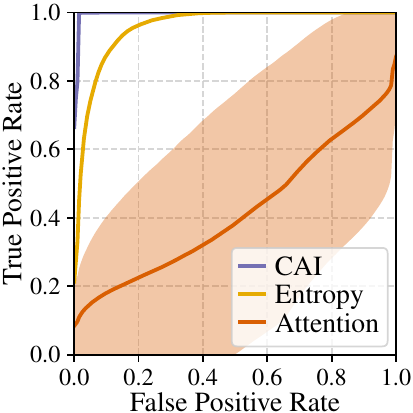}&
    \includegraphics[width=0.30\textwidth]{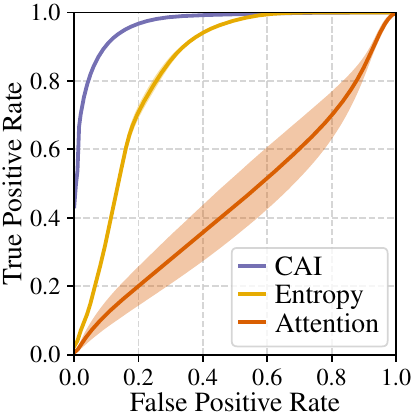}&
    \includegraphics[width=0.30\textwidth]{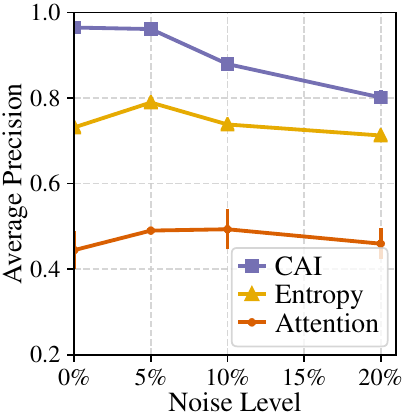}
    \end{tabular}
    \caption{Causal influence detection performance. (a, b) ROC curves on \Slide and \FPP environments. (c) Average precision for \FPP depending on added state noise. Noise level is given as percentage of one standard deviation over the dataset.}
    \label{fig:inf_eval_roc_curves}
    \end{center}
\end{figure}

\begin{table}
  \caption{Results for evaluating causal influence detection on different environments. We measure area under the ROC curve (AUC), average precision (AP), and the best achievable F-score (F\textsubscript{1}).}
  \label{tab:inf_eval_results}
  \vspace{.3em}
  \centering
  \renewcommand{\arraystretch}{1.03}
  \resizebox{.98\linewidth}{!}{ %
  \begin{tabular}{@{}rlllllll@{}}
    \toprule
    & \multicolumn{3}{c}{\textbf{\Slide}} & \phantom{} & \multicolumn{3}{c}{\textbf{\FPP}} \\
    \cmidrule{2-4} \cmidrule{6-8}
                                          & \multicolumn{1}{c}{AUC} & \multicolumn{1}{c}{AP} & \multicolumn{1}{c}{F\textsubscript{1}} &  & \multicolumn{1}{c}{AUC} & \multicolumn{1}{c}{AP} & \multicolumn{1}{c}{F\textsubscript{1}} \\
    \midrule
    CAI (ours)                            & $1.00\pm 0.00$   & $0.98\pm 0.00$   &  $0.95\pm 0.01$ &&  $0.97\pm 0.01$   & $0.96\pm 0.00$   &  $0.89\pm 0.00$                                                                                          \\
    Entropy                     &$0.96\pm 0.00$   & $0.47\pm 0.01$   &  $0.50\pm 0.01$ &&  $0.84\pm 0.00$   & $0.73\pm 0.00$   &  $0.78\pm 0.00$     \\
    Attention~\cite{Pitis2020CFDataAug} & $0.42\pm 0.31$   & $0.13\pm 0.14$   &  $0.18\pm 0.17$ &&  $0.46\pm 0.06$   & $0.44\pm 0.04$   &  $0.62\pm 0.00$    \\
    Contacts                     & $0.89$   & $0.78$   &  $0.88$ &&  $0.79$   & $0.77$   &  $0.73$     \\
    \bottomrule
  \end{tabular}
 }
\end{table}

\section{Improving Efficiency in Reinforcement Learning}
\label{sec:rl_applications}

Having established the efficacy of our causal action influence (CAI) measure, we now develop several approaches to use it to improve RL algorithms.%
We will empirically verify the following claims in robotic manipulation environments: CAI improves sample efficiency and performance by (i) better state exploration through an exploration bonus,
(ii) causal action exploration, and (iii) prioritizing experiences with causal influence during training.

We consider the environments \FP, \FPP from OpenAI Gym~\cite{Plappert2018MultiGoalRL}, and \FRT which is our modification containing a rotating table (explained in \supp{app:env_fetch_rot_table}).
These environments are goal-conditioned RL tasks with sparse rewards, meaning that each episode, a new goal is provided and the agent only receives a distinct reward upon reaching it.
We use DDPG~\cite{Lillicrap2016DDPG} with hindsight experience replay (HER)~\cite{Andrychowicz2017HER} as the base RL algorithm, a combination that achieves state-of-the-art results in these environment.
The influence detection model is trained online on the data collected from an RL agent learning to solve its task.
Since our measure \CAI requires an entity of interest, we choose the coordinates of the object (as $S_j$).
In all experiments, we report the mean success rate with standard deviation over 10 random seeds.
More information about the experimental settings can be found in \supp{app:hyperparameters_rl}.

\subsection{Intrinsic Motivation to Seek Influence}\label{subsec:app_exploration_bonus}

\textbf{Causal Action Influence as Reward Bonus.}
We hypothesize that it is useful for an agent to be intrinsically motivated to gain control over its environment.
We test this hypothesis by letting the agent maximize the causal influence it has over entities of interest.
This can be achieved by using our influence measure as a reward signal.
The reward signal can be used on its own, as an intrinsic motivation-type objective, or in conjunction with a task-specific reward as an exploration bonus.
In the former case, we expect the agent to discover useful behaviors that can help it master task-oriented skills afterwards; in the latter case, we expect learning efficiency to improve, especially in sparse extrinsic reward scenarios.
Concretely, for a state $s$, we define the bonus as $r_{\mathrm{CAI}}(s)=\CAI(s)$, and the total reward as $r(s) = r_{\mathrm{task}}(s) + \lambda_{\mathrm{bonus}}\ r_{\mathrm{CAI}}(s)$, where $r_{\mathrm{task}}(s)$ is the task reward, and $\lambda_{\mathrm{bonus}}$ is a hyperparameter.

\begin{figure}
  \centering
  \begin{minipage}[t]{.48\textwidth}
    \centering
    \includegraphics[width=0.9792\textwidth]{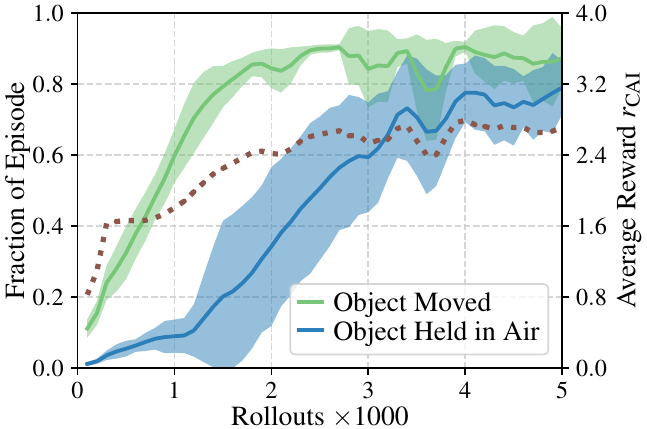}%
    \captionof{figure}{Intrinsically motivated learning on \FPP. The reward is only $r_{\mathrm{CAI}}$ measured on the object coordinates.}
    \label{fig:intrinsic_motivation}
  \end{minipage}%
  \hfill
  \begin{minipage}[t]{.48\textwidth}
    \centering
    \includegraphics[width=0.875\textwidth]{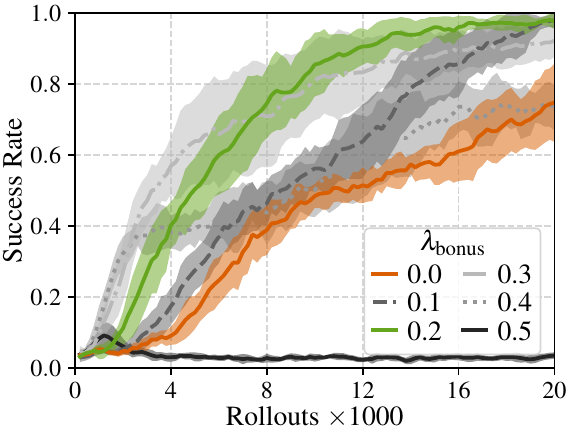}%
    \captionof{figure}{\emph{Exploration bonus} improves performance in \FPP.
      Sensitivity to the bonus reward scale $\lambda_\text{bonus}$.}
    \label{fig:bonus_ablation}
  \end{minipage}
\end{figure}

\textbf{Experiment on Intrinsically Motivated Learning.}
We first test the behavior of the agent in the absence of any task-specific reward on the \FPP environment.
Interestingly, the agent learned to grasp, lift, and hold the object in the air already after 2000 episodes, as shown in \fig{fig:intrinsic_motivation}.
The results demonstrate that encouraging causal control over the environment is well suited to prepare the agent for further tasks it might have to solve.

\textbf{Impact of CAI Reward Bonus.}
Second, we are interested in the impact of adding an exploration bonus.
In \fig{fig:bonus_ablation}, we present results on the \FPP environment when varying the reward scale $\lambda_\text{bonus}$.
Naturally, the exploration bonus needs to be selected in the appropriate scale as a value too high will make it dominate the task reward. 
If selected correctly, the sample efficiency is improved drastically; for example, we find that the agent reaches a success rate of 60\% four-times faster than the baseline (DDPG+HER) without any bonus ($\lambda_\text{bonus}=0$).

\subsection{Actively Exploring Actions with Causal Influence}\label{sec:active_explore}

\textbf{Following Actions with the Most Causal Influence.}
Exploration via bonus rewards favors the re-visitation of already seen states.
An alternative approach to exploration uses pro-active planning to choose exploratory actions.
In our case, we can make use of our learned influence estimator to pick actions which we expect will have the largest causal effect on the agent's surroundings.
From a causal viewpoint, the resulting agent can be seen as an experimenter that performs planned interventions in the environment to verify its beliefs.
Should the actual outcome differ from the expected outcome, subsequent model updates can integrate the new data to self-correct the causal influence estimator.

Concretely, given the agent being in state $s$, we choose the action that has the largest contribution to the empirical mean in \eqn{eq:cmi_approx}:
\begin{align}\label{eq:active_expl}
a^\ast = \argmax_{a \in \{a^{(1)}, \dots, a^{(K)}\} } \KL{p(s_j' \mid s, a)}{\frac{1}{K} \sum_{k=1}^K p(s_j' \mid s, a^{(k)})},
\end{align}
with $\{a^{(1)}, \ldots, a^{(K)}\} \stackrel{\text{iid}}{\sim} \pi$.
Intuitively, the selected action will be the one which results in maximal deviation from the expected outcome under all actions.
For states $s$ where the the agent is not in control, \ie $\CAI(s) \approx 0$, the action selection is uniform at random.

\textbf{Active Exploration in Practice.}
Can active exploration replace $\epsilon$-greedy exploration?
To gain insights, we study the impact of the fraction of actively chosen exploration actions.
For every exploratory action ($\epsilon$ is 30\% in our experiments), we choose an action according to \eqn{eq:active_expl} the specified fraction of the time, and otherwise a random action.
\Fig{fig:active_expl_ablation} shows that any amount of active exploration improves over simple random exploration.
Active causal action exploration can improve the sample efficiency roughly by a factor of two.
\begin{figure}
  \centering
  \hfill
  \begin{minipage}[t]{.48\textwidth}
    \centering
    \includegraphics[width=0.875\textwidth]{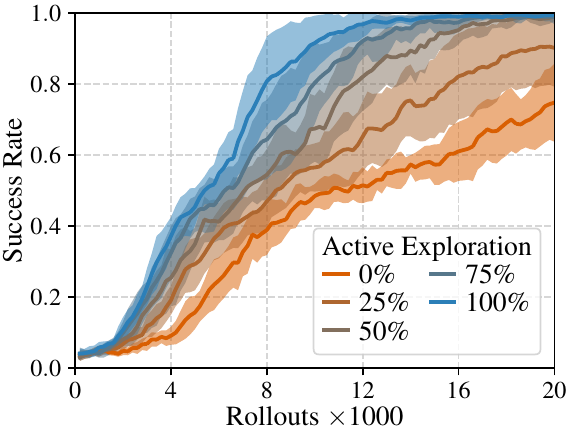}%
    \captionof{figure}{Performance of \emph{active exploration} in \FPP
      depending on the fraction of exploratory actions chosen actively \eqnp{eq:active_expl} from a total of 30\% exploratory actions.
    }
    \label{fig:active_expl_ablation}
  \end{minipage}
  \hfill
  \begin{minipage}[t]{.48\textwidth}
    \centering
    \includegraphics[width=0.875\textwidth]{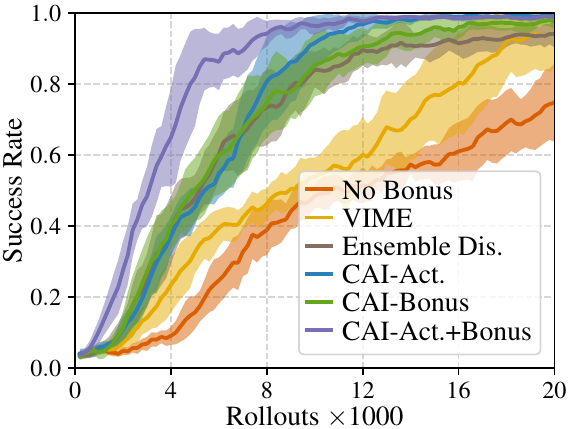}%
    \captionof{figure}{Experiment comparing exploration strategies on \FPP.
      The combination of active exploration and reward bonus yields the largest sample efficiency.
    }
    \label{fig:exploration}
  \end{minipage}
  \hfill
\end{figure}

\textbf{Combined CAI Exploration.}
We also present the combination of reward bonus and active exploration and compare our method with VIME, another exploration scheme based on information-theoretic measures~\cite{Houthooft2016VIME}.
In contrast to our method, VIME maximizes the information gain about the state transition dynamics.
Further, we compare to ensemble disagreement~\cite{Pathak2019ExplorationDisagreement}, which in effect minimizes epistemic uncertainty about the transition dynamics.
We compare different variants of VIME and ensemble disagreement in \supp{app:add_results_rl}, and display only their best versions here.
\Fig{fig:exploration} quantifies the superiority of all CAI variants (with ensemble disagreement as a viable alternative) and shows that combining the two exploration strategies compounds to increase sample efficiency even further.
In the figure, CAI uses 100\% active exploration and $\lambda_\mathrm{bonus}=0.2$ as the bonus reward scale.

\subsection{Causal Influence-based Experience Replay} \label{subsec:app_exp_replay}

\textbf{Prioritizing According to Causal Influence.}
We will now propose another method using CAI, namely to inform the choice of samples replayed to the agent during off-policy training.
Typically, past states are sampled uniformly for learning.
Intuitively, focusing on those states where the agent has control over the object of interest (as measured by CAI) should improve the sample efficiency.
We can implement this idea using a prioritization scheme that samples past episodes in which the agent had more influence more frequently.
Concretely, we define the probability $P^{(i)}$ of sampling any state from episode $i$ (of $M$ episodes) in the replay buffer as
\begin{align*}
\refstepcounter{equation}\latexlabel{firsthalf}
\refstepcounter{equation}\latexlabel{secondhalf}
P^{(i)} = \frac{p^{(i)}}{\sum_{i=1}^{M} p^{(i)}} \cdot \frac{1}{T}, && \text{with} && p^{(i)} &= \bigg(M + 1 - \mathrm{\rank_i} \sum_{t=1}^T \CAI\big(s^{(t)} \big) \bigg)^{\!-1}\hspace*{-.8em}.
\tag{\ref*{firsthalf}, \ref*{secondhalf}}
\end{align*}
where $T$ is the episode length, and $p^{(i)}$ is the priority of episode $i$.
The priority of an episode $i$ is based on the (inverse) rank of the episode ($\mathrm{\rank_i}$) when sorting all $M$ episodes according to their total influence (\ie sum of state influences). 
We call this \emph{causal action influence prioritization} (\CAIP).

This scheme is similar to Prioritized Experience Replay~\citep{Schaul2016PER}, with two differences:
instead of using the TD error for prioritization, we use the causal influence measure.
Furthermore, instead of prioritizing individual states, we prioritize episodes and sample states uniformly within episodes.
This is because the information about the return that can be achieved from an influence state still needs to be propagated back to non-influenced states by TD updates, which requires sampling them.

\begin{figure}
    \begin{center}
    \includegraphics[width=0.33\textwidth]{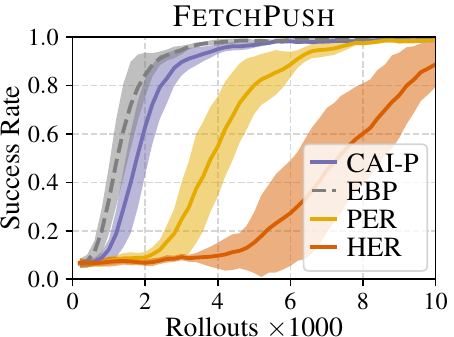}%
    \includegraphics[width=0.33\textwidth]{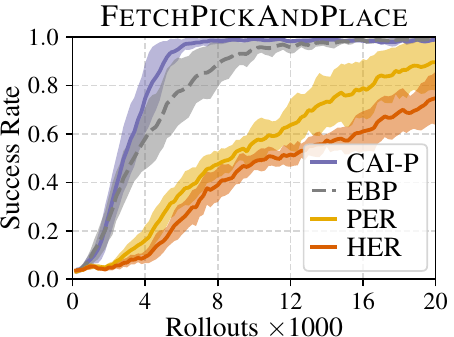}%
    \includegraphics[width=0.33\textwidth]{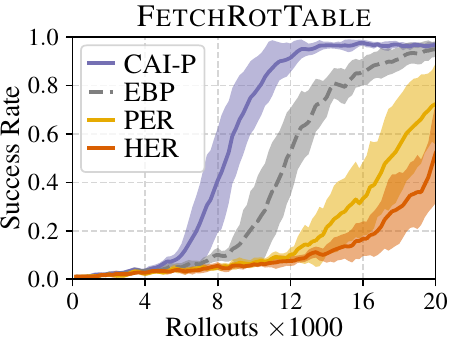}
    \caption{Prioritizing experience replay in different manipulation environments.
      Comparison of causal action influence prioritization (\CAIP) against baselines: the energy-based method (EBP)~\cite{Zhao2018EBP} with privileged information, prioritized experience replay (PER)~\citep{Schaul2016PER}, and HER without prioritization.}
    \label{fig:exp_replay}
    \end{center}
\end{figure}

\begin{wrapfigure}{r}{.19\textwidth}
   \vspace{1em}
   \includegraphics[width=\linewidth]{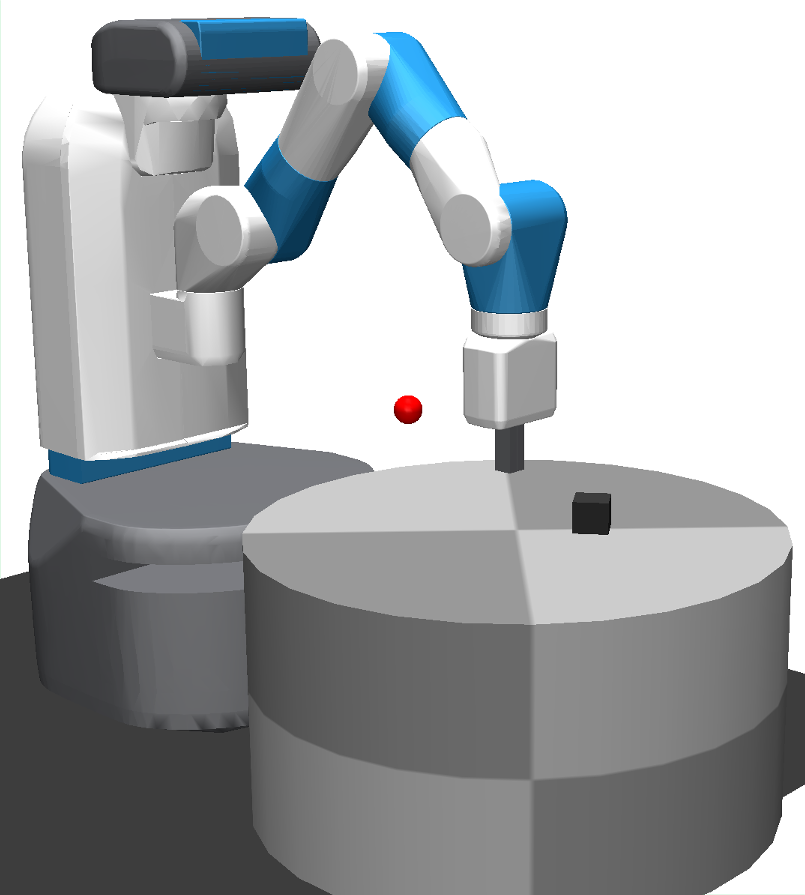}
   \caption{\FRT. The table rotates periodically.}\label{fig:rot-table}
\end{wrapfigure}
\textbf{Influence-Based Prioritization in Manipulation Tasks.} We compare our influence-based prioritization (\CAIP) against no prioritization in hindsight experience replay (HER) (a strong baseline for multi-goal RL), and two other prioritization schemes: prioritized experience replay (PER)~\citep{Schaul2016PER} and energy-based prioritization (EBP)~\cite{Zhao2018EBP}.
Especially EBP is a strong method for the environments we are considering as it uses privileged knowledge of the underlying physics to replay episodes based on the amount of energy that is transferred from agent to the object to manipulate.
All prioritization variants are equipped with HER as well.
The \FRT environment, shown in \fig{fig:rot-table}, is an interesting test bed as the object can move through the table rotation without the control of the agent.
The results, shown in \fig{fig:exp_replay}, reveal that causal influence prioritization can speed up learning drastically. Our method is on par or better than the energy-based (oracle) method EBP and improves over PER by a factor of 1.5--2.5 in learning speed (at 60\% success rate).
Finally, in \supp{app:add_results_rl}, we combine all our proposed improvements and show that \FPP can be solved up to 95\% success rate in just 3000 episodes.

\section{Discussion}
\label{sec:discussion}

In this work, we show how situation-dependent causal influence detection can help improve reinforcement learning agents.
To this end, we derive a measure of local causal action influence (CAI) and introduce a data-driven approach based on neural network models to estimate it.
We showcase using CAI as an exploration bonus, as a way to perform active action exploration, and to prioritize in experience replay.
Each of our applications yields strong improvements in sample efficiency.
We expect that there are further ways to use our causal measure in RL, \eg for credit assignment.

Our work has several limitations.
First, we assume full observability of the state, which simplifies the causal inference problem as there is no confounding between an agent's action and its effect.
Under partial observability, our approach could still be applicable using latent variable models~\cite{Louizos2017CausalEffect}.
Second, we require an available factorization of the state into causal variables.
The problem of automatically learning causal variables from high-dimensional data is open \cite{Schoelkopf2021CausalReprLearning} and our method would likely benefit from advances in this field.
Third, the accurate estimation of our measure relies on a correct model.
We found that deep networks can struggle at times to pick up the causal relationship between actions and entities.
How to design models with appropriate inductive biases for cause-effect inference is an open question~\cite{Peters2017CausalInference,Goyal2020InductiveBias,Schoelkopf2021CausalReprLearning}.

An intriguing future direction is to extend our work to influence detection between entities, a prerequisite for identifying multi-step influences of the agent on the environment.
Being able to model such indirect interventions would bring us closer to ``artificial scientists'' -- agents that can perform planned experiments to reveal the latent causal structure of the world.

\begin{ack}
The authors thank Andrii Zadaianchuk and Dominik Zietlow for many helpful discussions and providing feedback on the text.
Furthermore, the authors would also like to thank Sebastian Blaes for creating the \FRT environment.
The authors thank the International Max Planck Research School for Intelligent Systems (IMPRS-IS) for supporting Maximilian Seitzer. 
GM and BS are members of the Machine Learning Cluster of Excellence, EXC number 2064/1 -- Project number 390727645. 
We acknowledge the financial support from the German Federal Ministry of Education and Research (BMBF) through the Tübingen AI Center (FKZ: 01IS18039B). 
\end{ack}

\bibliography{neurips_2021}

\newpage
\appendix

\renewcommand{\thetable}{S\arabic{table}}
\renewcommand{\thefigure}{S\arabic{figure}}
\setcounter{figure}{0}
\setcounter{table}{0}

\begin{center}\large \textbf{Supplementary Material}\end{center}

\section{Proofs and Derivations}
\label{app:proofs}

\subsection{Proof of Propositions}
\label{app:proofs_propositions}

We first clarify the behavior of local CGMs (see Def.~\ref{def:local_cgm}) under interventions. 
In particular, the local CGM induced by $X=x$ on $P(\mathcal V)$ under an intervention $\Do{V}{v}$ is defined to be the local CGM induced by $X=x$ on the joint distribution $P^\Do{V}{v}(\mathcal V)$.

\begin{proof}[Proof of Prop.~\ref{prop:independence}]
``if'': If it holds that $S_j' \nindep A \mid S = s$, then by the Markov property~\cite[Def.~6.21]{Peters2017CausalInference}, there must be an unblocked path from $A$ to $S_j'$ in $\mathcal{G}_{S=s}$. Because the path over $S$ is blocked by observing $S=s$, and we assume no instantaneous effects, the only possible such path is the direct edge $A \rightarrow S_j'$.\\
``only if'': If there is an edge $A \rightarrow S_j'$ in $\mathcal{G}_{S=s}$, then causal minimality of the local causal graph says that $S_j'$ must be dependent on each parent given its other parents, meaning $S_j' \nindep A \mid S = s$.
\end{proof}

\begin{proof}[Proof of Prop.~\ref{prop:detection}]
To show the first part, note that the dependence ${S_j' \nindep A \mid S = s}$ under $\Do{A}{\pi(a | s)}$ implies that there exists some $s_j'$ and $a_1,\ a_2$ with $\pi(a_1 \mid s) > 0,\ \pi(a_2 \mid s) > 0$ for which
\begin{align}
p^\Do{A}{\pi}(s_j' \mid s, a_1) = p(s_j' \mid s, a_1) \neq p(s_j' \mid s, a_2) = p^\Do{A}{\pi}(s_j' \mid s, a_2).
\end{align}
Any $\pi'$ with full support would also have $\pi'(a_1 \mid s) > 0,\ \pi'(a_2 \mid s) > 0$, and so
\begin{align}
p^\Do{A}{\pi'}(s_j' \mid s, a_1) = p(s_j' \mid s, a_1) \neq p(s_j' \mid s, a_2) = p^\Do{A}{\pi'}(s_j' \mid s, a_2),
\end{align}
implying the dependence under $\Do{A}{\pi'}$.
To show that the agent is in control of $S_j'$ in $S=s$, there needs to be an edge $A \rightarrow S_j'$ in $\mathcal{G}_{S=s}$ under all interventions $\Do{A}{\pi'}$ with $\pi'$ having full support.
This is the case, because as shown, for all interventions $\Do{A}{\pi'}$ with $\pi'$ having full support it holds that ${S_j' \nindep A \mid S = s}$, and by Prop.~\ref{prop:independence}, there is an edge $A \rightarrow S_j'$ in $\mathcal{G}_{S=s}$ under any such intervention.

To show the second part, we show that if ${S_j' \indep A \mid S = s}$ under any intervention $\Do{A}{\pi}$ with $\pi$ having full support, for any intervention $\Do{A}{\pi'}$ it holds that $P^{\Do{A}{\pi'}}(S_j' \mid S = s,  A) = P^\Do{A}{\pi'}(S_j' \mid S = s)$.
This follows from the fact that for any $\pi'$, it holds that
\begin{align}
P^\Do{A}{\pi'}(S_j' \mid S = s, A) &= P(S_j' \mid S = s, A) \\
&= P(S_j' \mid S = s) = P^\Do{A}{\pi'}(S_j' \mid S = s)
\end{align}
where the first equality is due to the autonomy property of causal mechanisms~\cite[Eq.~6.7]{Peters2017CausalInference}, and the second equality because of the independence ${S_j' \indep A \mid S = s}$.
Note that if $\pi$ had not had full support, we would not be allowed to use the second equality as then $P(S_j' \mid S = s, A = a) = P(S_j' \mid S = s)$ only for $a$ with $\pi(a) > 0$.
The agent is not in control of $S_j'$ in $S=s$, as for all interventions $\Do{A}{\pi'}$ with $\pi'$ having full support, there is no edge $A \rightarrow S_j'$ in $\mathcal{G}_{S=s}$ by Prop.~\ref{prop:independence}.
\end{proof}

\subsection{CMI Formula}
\label{app:cmi_derivation}

\begin{align}
 \CAI(s) = I(S_j'; A \mid S = s) &= \KL{P_{S_j', A \mid s}}{P_{S_j' \mid s} \otimes P_{A \mid s}} \\
&= \Exp_{S_j', A \mid s} \left[ \log \frac{p(s_j', a \mid s)}{p(s_j' \mid s) \pi(a \mid s)} \right] \\
 &= \Exp_{S_j', A \mid s} \left[ \log \frac{p(s_j' \mid s, a)}{p(s_j' \mid s)} \right] \\
 &= \Exp_{A \mid s} \left[ \KL{P_{S_j' \mid s, a}}{P_{S_j' \mid s}} \right] \label{eq1}
\end{align}

\subsection{Proof that CAI is a Pointwise Version of Janzing et al.'s Causal Strength}
\label{app:proof_causal_strength}

\newcommand{\notx}{{\backslash\mkern-2mu X}}

\citeposs{Janzing2013CausalInf} measure of causal strength quantifies the impact that removing a set of arrows in the causal graph would have.
As it is the relevant case for us, we concentrate here on the single arrow version, for instance, between random variables $X$ and $Y$.
The idea is to consider the arrow as a ``communication channel'' and evaluate the corruption that could be done to the signal that flows between $X$ and $Y$ by cutting the channel.
To do so, the distribution that feeds the channel is replaced with $P(X)$, \ie the marginal distribution of $X$.
The measure of causal strength then is equal to the difference between the pre- and post-cutting joint distribution as given by the KL divergence.

Formally, let $\mathcal{V}$ denote the set of variables in the causal graph, let $X \rightarrow Y$ be the arrow of interest with $X, Y \in \mathcal{V}$, and let $\Pa_Y^\notx$ be the set of parents of $Y$ without $X$.
Then, the post-cutting distribution on $Y$ is defined as
\begin{align}
p_{X \rightarrow Y}\g(y \mid \pa_Y^\notx\g) = \int p\g(y \mid x, \pa_Y^\notx\g) p\g(x\g) \dx{x}.
\end{align}
The new joint distribution after such an intervention is defined as
\begin{align}
p_{X \rightarrow Y}\g(v_1, \dots, v_{|\mathcal{V}|} \g) = p_{X \rightarrow Y}\g(y \mid \pa_Y^\notx\g) \prod_{\stackrel{j}{v_j \neq y}} p\g(v_j \mid \Pa(v_j)\g).
\end{align}
Finally, the causal strength of the arrow $X \rightarrow Y$ is defined as
\begin{align}
\mathfrak{C}_{X \rightarrow Y} = \KL{P(V)}{P_{X \rightarrow Y}(V)}.
\end{align}

To show the correspondence to our measure, we first restate Lemma 3 of \cite{Janzing2013CausalInf}.
For a single arrow $X \rightarrow Y$, causal strength can also be written as the KL between the conditionals on $Y$:
\begin{align}
\mathfrak{C}_{X \rightarrow Y} &= \KL{P(Y \mid \Pa_Y)}{P_{X \rightarrow Y}\g(Y \mid \Pa_Y^\notx\g)} \\
 &= \int \KL{P(Y \mid \pa_Y)}{P_{X \rightarrow Y}\g(Y \mid \pa_Y^\notx\g)} P(\pa_Y) \dx{\pa_Y}
\end{align}
We rewrite this further:
\begin{align}
&= \int P\g(\pa_Y^\notx\g) P\g(X \mid \pa_Y^\notx\g) \KL{P(Y \mid \pa_Y)}{P_{X \rightarrow Y}\g(Y \mid \pa_Y^\notx\g)} \dx{x}\,\dx{\pa_Y^\notx} \\
&= \Exp_{\pa_Y^\notx} \g[ \int P\g(X \mid \pa_Y^\notx\g) \KL{P(Y \mid \pa_Y)}{P_{X \rightarrow Y}\g(Y \mid \pa_Y^\notx\g)} \dx{x} \g] \\
&= \Exp_{\pa_Y^\notx} \g[ \textcolor{ourorange}{\Exp_{x \mid \pa_Y^\notx} \g[ \KL{P(Y \mid \pa_Y)}{P_{X \rightarrow Y}\g(Y \mid \pa_Y^\notx\g)} \g]} \g]
\end{align}
And we see that the \textcolor{ourorange}{inner part} corresponds to our measure $\CAI(s)$ (cmp. \eqn{eq:cai}) for the choices of variables $X
\mathrel{\widehat{=}} A$, $Y \mathrel{\widehat{=}} S'_j$, and $\Pa_Y^\notx \mathrel{\widehat{=}} S$, provided that $P(X) \mathrel{\widehat{=}} \pi(A)$ is not dependent on $S$. Thus, CAI is a pointwise version of causal strength for policies independent of the state:
\begin{align}
\mathfrak{C}_{A \rightarrow S'_j} = \Exp_{s} \g[ \CAI(s) \g]\tag*{$\square$}
\end{align}

\subsection{Approximating the KL Divergence Between a Gaussian and a Mixture of Gaussians}
\label{app:approx_kl_gaussians}

In this section, we give the approximation we use for the KL divergence in \eqn{eq:cmi_approx}.
We first state the approximation for the general case of the KL between two mixtures of Gaussians, and then specialize to our case when the first distribution is Gaussian distributed.
Here, we use the notation of \citet{Durrieu2012LowerUpperKL}.

Let $f$ be the PDFs of a multivariate Gaussians mixture with $A$ components, mixture weights $\omega_a^f \in (0, 1]$, means $\mu_a^f \in \Real^d$ and covariances $\Sigma_a^f\g \in \Real^{d\times d}$, where $a \in \{1, \dots, A\}$ is the index is of the $a$'th component. Then,
\begin{align}
 f(x) = \sum_{a=1}^A \omega_a^f f_a(x) = \sum_{a=1}^A \omega_a^f \mathcal{N}\g(x; \mu_a^f, \Sigma_a^f\g),
\end{align}
where $f_a(x)=\mathcal{N}\g(x; \mu_a^f, \Sigma_a^f\g)$ is the PDF of a multivariate Gaussian with mean $\mu_a^f$ and covariance $\Sigma_a^f$.
Analously, let $g$ be the PDF of a multivariate Gaussians mixture with $B$ components.
We are interested in the KL divergence $\KL{f}{g} = \int_{\Real^d} f(x) \log \frac{f(x)}{g(x)} \dx{x}$, which is intractable.

There are several ways to approximate the KL based on the decomposition $\KLapp{KL} = H(f, g) - H(f)$, where $H(f, g)$ is the cross-entropy between $f$ and $g$ and $H(f)$ is the entropy of $f$.
We will state the so-called \emph{product} approximation $\KLapp{prod}$ and the \emph{variational} approximation $\KLapp{var}$~\cite{Hershey2007KLApprox}.
Starting with $\KLapp{prod}$:
\begin{align}
 H(f, g) &\geq -\sum_a \omega_a^f \log \g( \sum_b \omega_b^g t_{ab} \g) \label{eq:ineq_cross_ent} \\
 H(f) &\geq -\sum_a \omega_a^f \log \g( \sum_{a'} \omega_{a'}^f z_{aa'} \g) \label{eq:ineq_ent} \\
 \KLapp{prod} &\colonequal \sum_a \omega_a^f \log \g( \frac{\sum_{a'} \omega_{a'}^f z_{aa'}}{\sum_b \omega_b^g t_{ab}} \g),
\end{align}
where $t_{ab}=\int f_a(x) g_b(x) \dx{x},\ z_{aa'}=\int f_a(x) f_{a'}(x) \dx{x}$ are normalization constants of product of Gaussians, and the inequalities in \eqref{eq:ineq_cross_ent}, \eqref{eq:ineq_ent} are based on Jensen's inequality. For the variational approximation:
\begin{align}
 H(f, g) &\leq \sum_a \omega_a^f H(f_a) - \sum_a \omega_a^f \log \g( \sum_b \omega_b^g e^{-\KL{f_a}{g_b}} \g)\label{eq:ineq_cross_ent_var} \\
  H(f) &\leq \sum_a \omega_a^f H(f_a) - \sum_a \omega_a^f \log \g( \sum_{a'} \omega_{a'}^f e^{-\KL{f_a}{f_{a'}}} \g) \label{eq:ineq_ent_var}\\
 \KLapp{var} &\colonequal \sum_a \omega_a^f \log \g( \frac{\sum_{a'} \omega_{a'}^f e^{-\KL{f_a}{f_{a'}}}}{\sum_b \omega_b^g e^{-\KL{f_a}{g_b}}} \g),
\end{align}
where \eqref{eq:ineq_cross_ent_var}, \eqref{eq:ineq_ent_var} are based on solving variational problems.
It can be shown that the mean between $\KLapp{prod}$ and $\KLapp{var}$ is the mean of a lower and upper bound to $\KLapp{KL}$, with better approximation qualities~\cite{Durrieu2012LowerUpperKL}.
Consequently, we use $\KLapp{mean} \colonequal \frac{\KLapp{prod} + \KLapp{var}}{2}$ as the basis of our approximation.
We can simplify $\KLapp{mean}$ as in our case, we know that $f$ has only one component.
This means that we can compute $H(f)$ in closed form, and do not need to use the inequalitities \eqref{eq:ineq_ent}, \eqref{eq:ineq_ent_var}.
Approximating $H(f,g)$ with the mean of the lower bound \eqref{eq:ineq_cross_ent} and upper bound \eqref{eq:ineq_cross_ent_var},
\begin{align}
H_\text{mean}(f,g) \colonequal \frac{1}{2} \g( -\log \g( \sum_b \omega_b^g t_{ab} \g) + H(f) - \log \g( \sum_b \omega_b^g e^{-\KL{f_a}{g_b}} \g) \g),
\end{align}
we get the final formula we use
\begin{align}
\KLapp{mean} &\colonequal H_\text{mean}(f,g) - H(f) \\
&= -\frac{1}{2} \log \g( \sum_b \omega_b^g t_{ab} \g) - \frac{1}{2} \log \g( \sum_b \omega_b^g e^{-\KL{f_a}{g_b}} \g) - \frac{1}{2} H(f).
\end{align}
Note that this term can become negative, whereas the KL is non-negative.
In practice, we thus threshold $\KLapp{mean}$ at zero.
For completeness, we also state the entropy of a Gaussian
\begin{align}
 H(f) = \frac{1}{2} \log \g((2\pi e)^d \lvert \Sigma \rvert \g), \label{eqn:entropy_gaussian}
\end{align}
the log normalization constant for a product of Gaussians
\begin{align}
 \log t_{ab} = -\frac{d}{2} \log 2\pi - \frac{1}{2} \log \lvert \Sigma_a^f + \Sigma_b^g \rvert - \frac{1}{2} (\mu_b^g - \mu_a^f)^T
 (\Sigma_a^f + \Sigma_b^g)^{-1} (\mu_b^g - \mu_a^f),
\end{align}
and the KL between two Gaussians
\begin{align}
 \KL{f_a}{g_b} = -\frac{d}{2} + \frac{1}{2} \log \frac{\lvert \Sigma_a^f \rvert}{\lvert \Sigma_b^g \rvert} + \frac{1}{2} \Tr \g( (\Sigma_b^g)^{-1} \Sigma_a^f \g) + \frac{1}{2} (\mu_b^g - \mu_a^f)^T (\Sigma_b^g)^{-1} (\mu_b^g - \mu_a^f).
\end{align}
In our experiments, we assume independent dimensions, that is, we parametrize the covariance $\Sigma$ as a diagonal matrix.
With this, the above formulas can be further simplified.

\section{Environments}
\label{app:environments}

\subsection{1D-Slide}
\label{app:env_1d_slide}

\begin{wrapfigure}[13]{r}{.25\textwidth}
    \vspace{-\baselineskip}
    \centerline{%
      \setlength{\fboxsep}{0pt}%
      \fbox{\includegraphics[width=0.25\columnwidth]{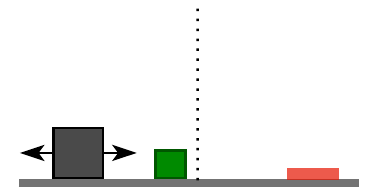}}
    }%
   \caption{Schematic of \Slide environment. The agent (black square) has to slide the object (green square) to the target location (red zone), but can not pass the dotted line.}
   \label{fig:oned_slide_env}
\end{wrapfigure}
\Slide is a simple environment that we designed to test influence detection, as we can easily derive when the agent has influence or not.
It consists of an agent and an object positioned on a line, with both agent and object only being able to move left and right.
See \fig{fig:oned_slide_env} for a visualization.
The goal of the agent is to move the object to a goal zone.
As the agent can not cross past the center of the environment, it has to hit the object at the appropriate speed.
The state space $\mathcal{S} \subset \mathbb{R}^4$ consists of position and velocity of the agent and object.
The agent's action $a \in \mathcal{A} \subseteq [-1, 1]$ applies acceleration to the agent.
On contact with the object, the full impulse of the agent is transferred to the object.
We can derive if the agent has causal influence in a state by simulating applying the maximum acceleration in both directions and checking whether a contact has occurred, and the object state has changed.

\subsection{FetchPickAndPlace}
\label{app:env_fetch_pickandplace}

The \textsc{Fetch} environments in OpenAI Gym~\cite{Brockman2016OpenAIG} are built on top of the MuJoCo simulator~\cite{Todorov2012MuJoCoAP}, and consist of a position-controlled 7 DoF robotic arm~\cite{Plappert2018MultiGoalRL}.
In \FPP, the state space $\mathcal{S} \subset \mathbb{R}^{25}$ consists of position and velocity of the robot's end effector, the position and velocity of the gripper, and the object's position and rotation, linear and angular velocities, and position relative to the end effector.
The action space $\mathcal{A} \subseteq [-1, 1]^{4}$ controls the gripper movement and opening/closening of the gripper.
For the experiments involving Transformer models, we split the state space into agent and object state components, where we do not include the relative positions between gripper and object into either component.

For our experiment in \sec{sec:empirical_eval} evaluating the causal influence detection, we need to determine whether the agent can potentially influence the object, \ie whether the agent is ``in control''.
This is difficult to determine analytically, which is why we designed a heuristic.
The idea is to find an ellipsoid around the end effector that captures its maximal movement range, and intersect this ellipsoid with the object to decide whether influence is possible.
As the range of movement of the robotic arm is different depending on its position, we first build a lookup table (offline) that contains, for different start positions, end positions after applying different actions.
To do so, we grid-sample the 3D-space over the table (50 grid points per dimension), plus a sparser sampling of the outer regions (20 grid points per dimension), resulting in \numprint{133000} starting locations for the lookup table.
Then, the following procedure is repeated for each starting location and action: after resetting the simulator, the end effector is manually moved to one of the starting locations, one of the maximal actions in each dimension (\ie, $-1$ and $1$, for a total of 6 actions) is applied, and the end position after one environment step is recorded in the lookup table.

Now, while the environment runs, for each step, we find the sampling point closest to the position of the robotic arm in the lookup table using a KD-tree, and find the corresponding end positions.
From the end positions, we build the ellipsoid by taking the maximum absolute deviation in each dimension to be the length of the ellipsoid axis in this dimension.
The ellipsoid so far does not take into account the spatial extents of object and gripper.
Thus, we extend the ellipsoid by the sizes of the object and gripper fingers in each dimension.
Furthermore, we take into account that the gripper fingers can be slid in y-direction by the agent's action by extending the ellipsoid's y-axis correspondingly.
The label of ``agent in control'' is then obtained by checking whether the object location lies within the ellipsoid.
Last, we also label a state as ``agent in control'' when there is an actual contact between gripper and object in the following step.
We note that the exact procedure described above is included in the code release.

\subsection{FetchRotTable}
\label{app:env_fetch_rot_table}

\FRT is an environment designed by us to test CAI prioritization in a more challenging setting.
In \FRT, the table rotates periodically, moving the object around.
This creates a confounding effect for influence detection, as there is another source of object movements besides the agent's own actions.
This means that CAI needs to differentiate between different causes for movements.

\FRT is based on \FPP, but the rectangular table is replaced with a circular one (see \fig{fig:rot-table}).
The table rotates with inertia by 45 degrees over the course of 25 environment steps, and then pauses for 5 steps.
To make the resulting environment Markovian, the state space of \FPP is extended to $\mathcal{S} \subset \mathbb{R}^{29}$, additionally including sine and cosine of the table angle, the rotational velocity of the table, and the amount the table will rotate in the current state in radians.
The task of the agent is the same as in \FPP, \ie to move the object to a target position.
If the target position is on the table, the agent thus has to learn to hold the object in position while the table rotates.
In contrast to \FPP, the goal distribution is different, with 90\% of goals in the air and only 10\% on the table.
This makes the task more challenging, as the agent has to master grasping and lifting before 90\% of the possible positive rewards can be accessed.

\section{Additional Results for Influence Evaluation}
\label{app:add_results_inf_eval}

\begin{figure}[tb]
  \begin{center}
    \begin{tabular}{@{}c@{\hskip 1em}c@{\hskip 1em}c@{}}
    \hspace{1.2em}(a) \small\Slide &\hspace{1.2em}(b) \small\FPP & \hspace{1.2em}(c) \small\FPP\\[-0.2em]
    \includegraphics[width=0.30\textwidth]{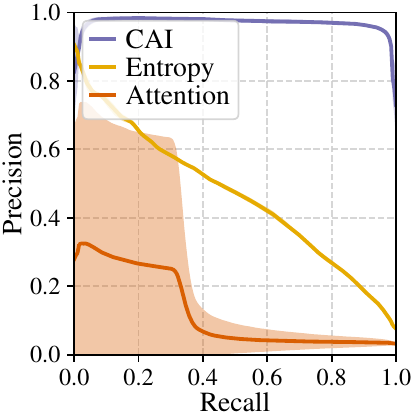}&
    \includegraphics[width=0.30\textwidth]{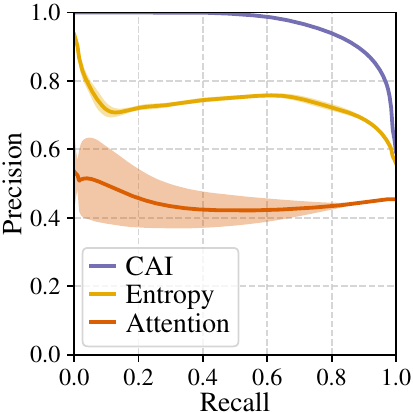}&
    \includegraphics[width=0.30\textwidth]{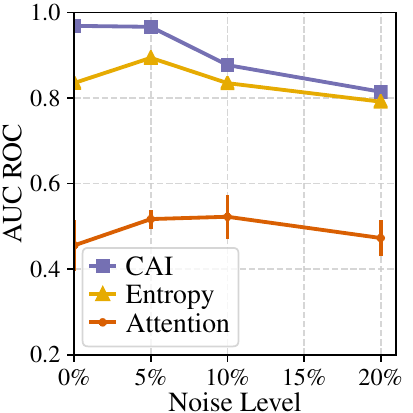}
    \end{tabular}
    \caption{Causal influence detection performance, complementing \fig{fig:inf_eval_roc_curves} in the main part. (a, b) Precision-recall curves on \Slide and \FPP environments. (c) Area-under-ROC curve for \FPP depending on added state noise. Noise level is given as percentage of one standard deviation over the dataset.}
    \label{fig:inf_eval_pr_curves}
    \end{center}
\end{figure}

\begin{figure}[tb]
  \begin{center}
    \begin{tabular}{@{}c@{\hskip 1em}c@{\hskip 1em}c@{}}
    \hspace{1.2em}(a) \small\FPP & \hspace{1.2em}(b) \small\FPP &\hspace{1.2em}\small\FPP \\[-0.2em]
    \includegraphics[width=0.30\textwidth]{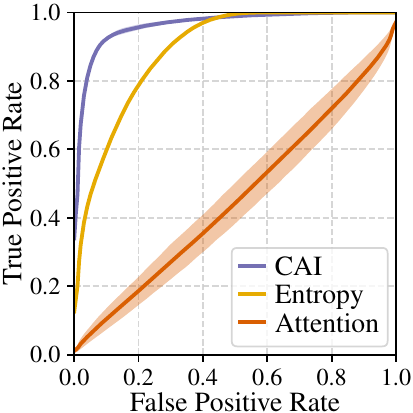} &
    \includegraphics[width=0.30\textwidth]{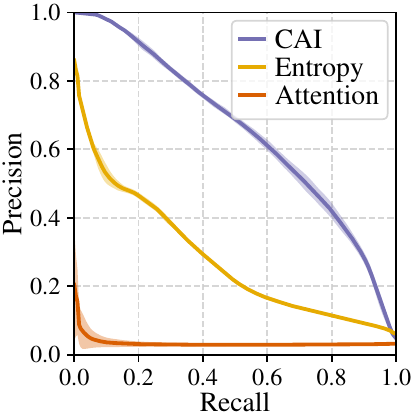} &
    \includegraphics[width=0.30\textwidth]{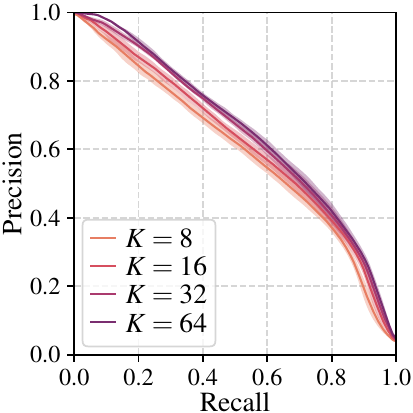}
    \end{tabular}
    \caption{Causal influence detection performance on a test dataset of 250k transitions from a \emph{random policy}. This dataset only has 3.3\% transitions with influence (\ie labeled as positive), which is why the detection task is considerably harder. Left, center: ROC and PR curves. Right: PR curves for CAI when varying the number of actions $K$.}
    \label{fig:inf_eval_curves_rand}
    \end{center}
\end{figure}

In this section, we include additional results for the empirical evaluation of influence detection in \sec{sec:empirical_eval}.
\Fig{fig:inf_eval_pr_curves}a and \fig{fig:inf_eval_curves_rand}b show precision-recall (PR) curves for the experiment in \sec{sec:empirical_eval}, while \fig{fig:inf_eval_pr_curves}c shows how area-under-ROC curve varies while increasing the observation noise level.

For \FPP, we also evaluated on a test dataset obtained from a random policy.
On this dataset, the detection task is considerably harder:
it contains only 3.3\% transitions where the agent has influence (\ie labeled as positive), and it does not contain samples where the agent moves the object in the air, which are easier to detect as influence.
We show ROC and PR curves for this dataset in \fig{fig:inf_eval_curves_rand}a and \fig{fig:inf_eval_curves_rand}b.
As expected, the detection performance drops compared to the other test dataset, which can in particular be seen in the PR curve.
But overall, the performance is still quite good when taking into account the low amount of positive samples.
In \fig{fig:inf_eval_curves_rand}c, we also plot the impact of varying the number of sampled actions $K$ on this dataset.
As one can see, the method is relatively robust to this parameter, giving decent performance even under a small number of sampled actions.
However, we think that a higher number of sampled actions is important in edge-case situations, which are overshadowed in such a quantitative analysis.

Finally, in \fig{fig:pnp_score_analysis}, we give a qualitative analysis of CAI.
Here, we plot the trajectory in three different situations: no contact of agent and object, briefly touching the object, and successfully manipulating the object.
As desired, CAI is low and high in the ``no contact'' and ``successful manipulation'' trajectories.
In the ``brief touch'' trajectory, CAI spikes around the contact.
However, in steps 8-11, when the agent hovers near the object, the heuristical labeling (see \sec{app:env_fetch_pickandplace}) still detects that influence of agent on object is possible, which CAI does not register.
These are difficult cases which show that our method still has room for improvement; we think that these failure cases could be resolved by employing a better model.
However, we also note that the ``ground truth'' label is only based on a heuristic, which will make mispredictions at times as well.

\begin{figure}[htb]
\centering
\hspace{0.5em}
\begin{overpic}[width=0.09\textwidth]{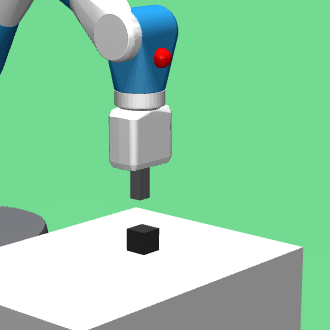} \put (80,78) {\small 0} \end{overpic} \hfill
\begin{overpic}[width=0.09\textwidth]{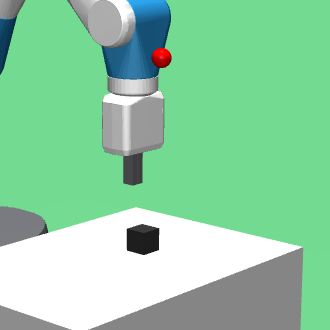} \put (80,78) {\small 2} \end{overpic} \hfill
\begin{overpic}[width=0.09\textwidth]{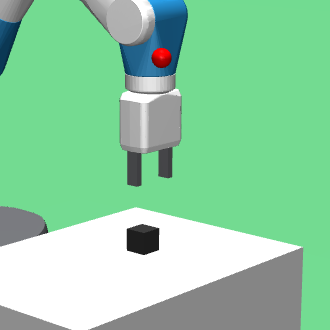} \put (80,78) {\small 4} \end{overpic} \hfill
\begin{overpic}[width=0.09\textwidth]{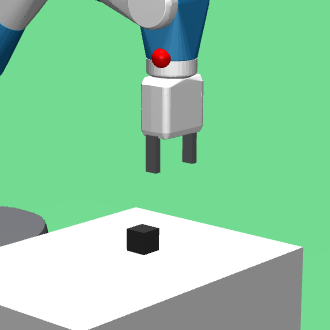} \put (80,78) {\small 6} \end{overpic} \hfill
\begin{overpic}[width=0.09\textwidth]{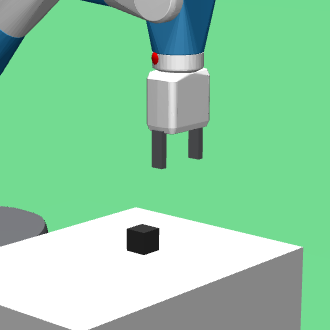} \put (80,78) {\small 8} \end{overpic} \hfill
\begin{overpic}[width=0.09\textwidth]{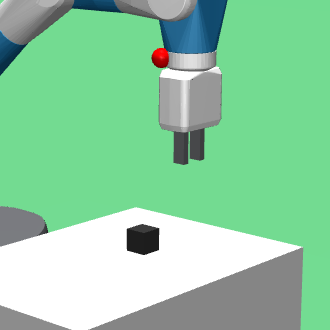} \put (71,78) {\small 10} \end{overpic} \hfill
\begin{overpic}[width=0.09\textwidth]{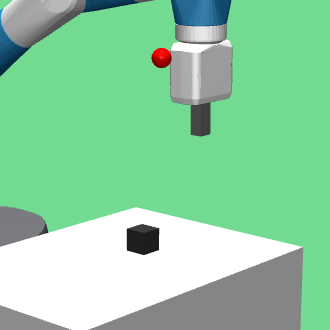} \put (72,78) {\small 12} \end{overpic} \hfill
\begin{overpic}[width=0.09\textwidth]{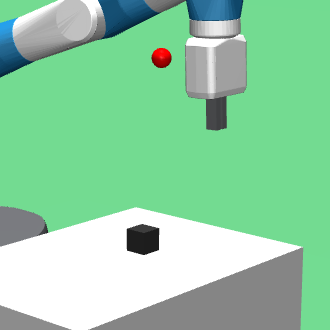} \put (73,78) {\small 14} \end{overpic} \hfill
\begin{overpic}[width=0.09\textwidth]{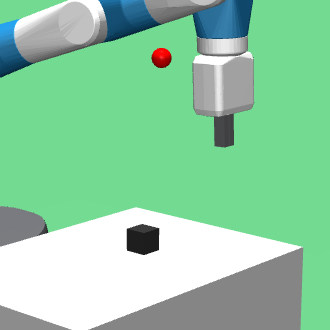} \put (73,78) {\small 16} \end{overpic} \hfill
\begin{overpic}[width=0.09\textwidth]{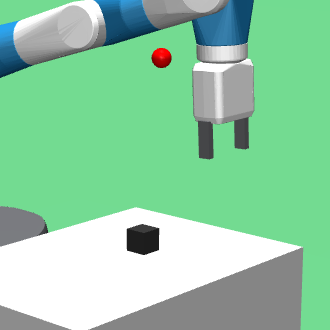} \put (73,78) {\small 18} \end{overpic} \hfill
\\[0.2em]
\includegraphics[width=\textwidth]{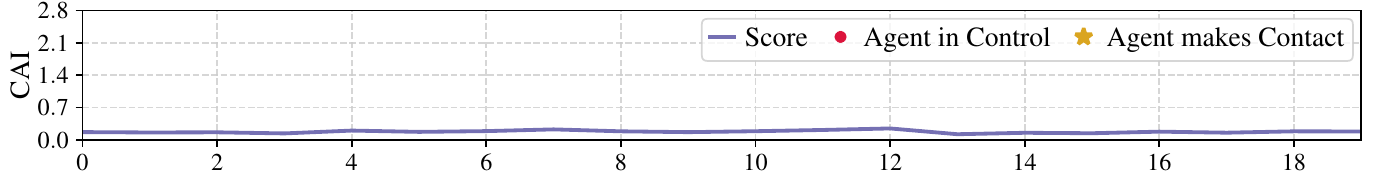}
\\[0.6em]
\hspace{0.5em}
\begin{overpic}[width=0.09\textwidth]{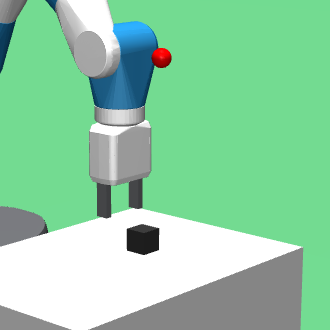} \put (80,78) {\small 0} \end{overpic} \hfill
\begin{overpic}[width=0.09\textwidth]{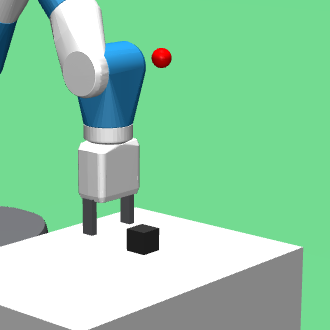} \put (80,78) {\small 2} \end{overpic} \hfill
\begin{overpic}[width=0.09\textwidth]{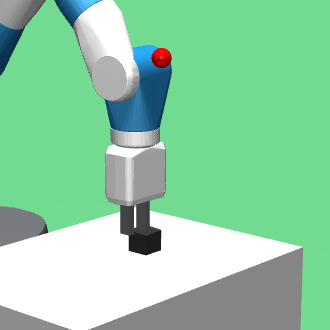} \put (80,78) {\small 4} \end{overpic} \hfill
\begin{overpic}[width=0.09\textwidth]{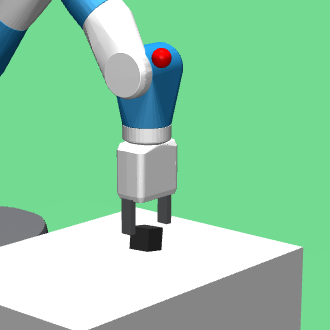} \put (80,78) {\small 6} \end{overpic} \hfill
\begin{overpic}[width=0.09\textwidth]{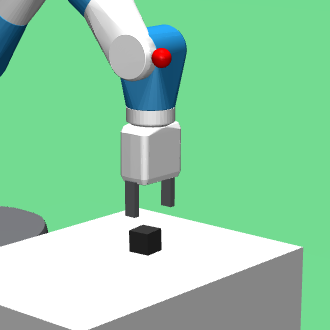} \put (80,78) {\small 8} \end{overpic} \hfill
\begin{overpic}[width=0.09\textwidth]{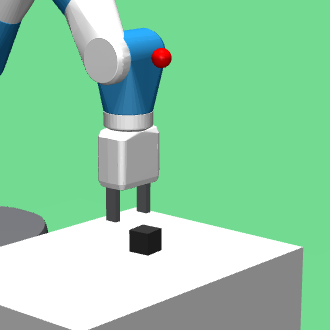} \put (71,78) {\small 10} \end{overpic} \hfill
\begin{overpic}[width=0.09\textwidth]{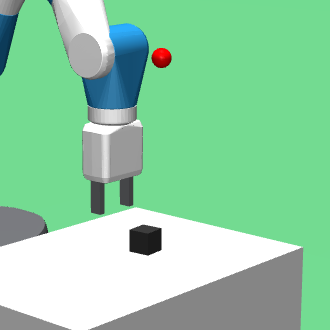} \put (71,78) {\small 12} \end{overpic} \hfill
\begin{overpic}[width=0.09\textwidth]{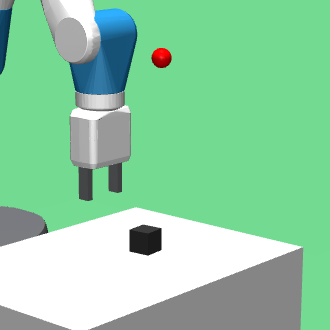} \put (71,78) {\small 14} \end{overpic} \hfill
\begin{overpic}[width=0.09\textwidth]{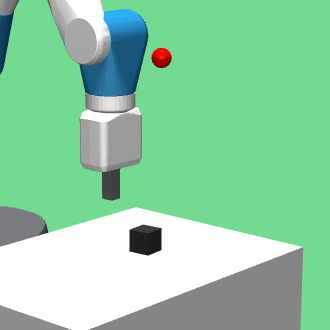} \put (71,78) {\small 16} \end{overpic} \hfill
\begin{overpic}[width=0.09\textwidth]{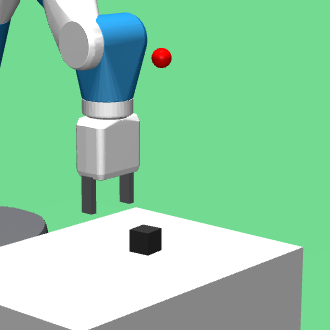} \put (71,78) {\small 18} \end{overpic} \hfill
\\[0.2em]
\includegraphics[width=\textwidth]{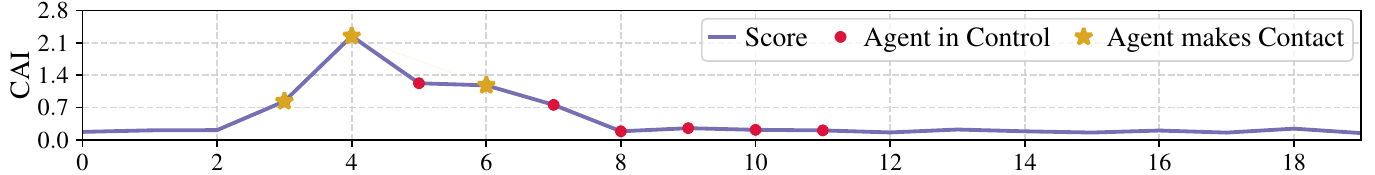}
\\[0.6em]
\hspace{0.5em}
\begin{overpic}[width=0.09\textwidth]{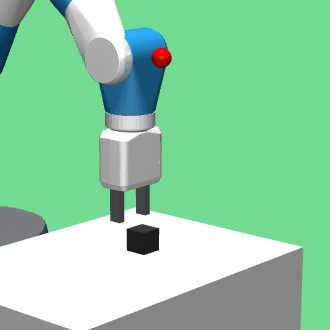} \put (80,78) {\small 0} \end{overpic} \hfill
\begin{overpic}[width=0.09\textwidth]{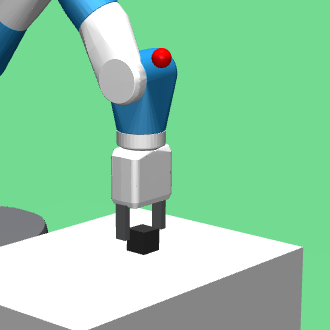} \put (80,78) {\small 2} \end{overpic} \hfill
\begin{overpic}[width=0.09\textwidth]{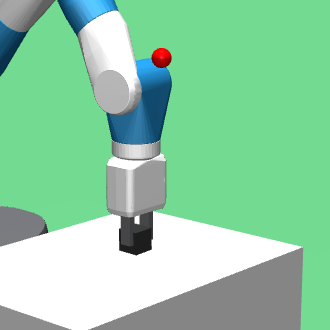} \put (80,78) {\small 4} \end{overpic} \hfill
\begin{overpic}[width=0.09\textwidth]{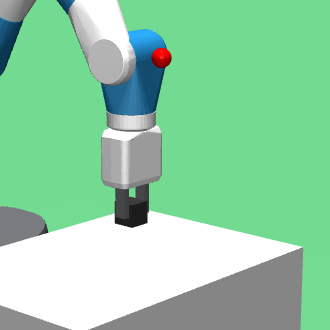} \put (80,78) {\small 6} \end{overpic} \hfill
\begin{overpic}[width=0.09\textwidth]{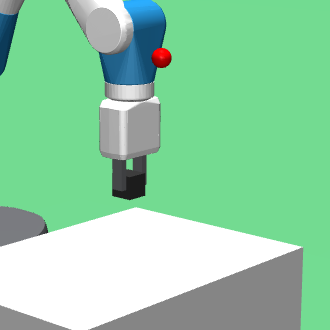} \put (80,78) {\small 8} \end{overpic} \hfill
\begin{overpic}[width=0.09\textwidth]{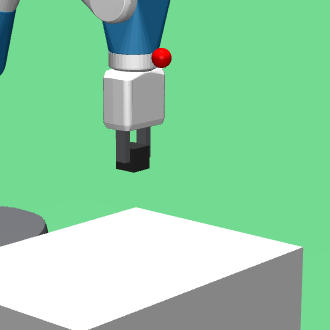} \put (71,78) {\small 10} \end{overpic} \hfill
\begin{overpic}[width=0.09\textwidth]{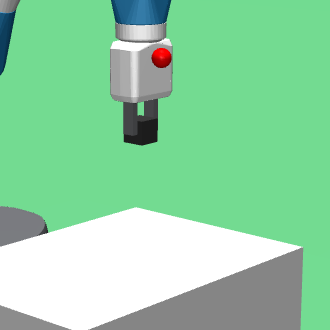} \put (71,78) {\small 12} \end{overpic} \hfill
\begin{overpic}[width=0.09\textwidth]{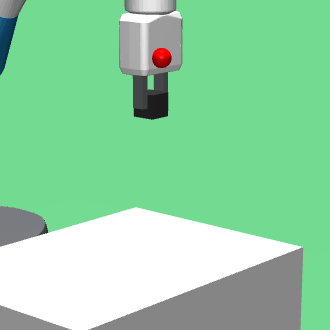} \put (71,78) {\small 14} \end{overpic} \hfill
\begin{overpic}[width=0.09\textwidth]{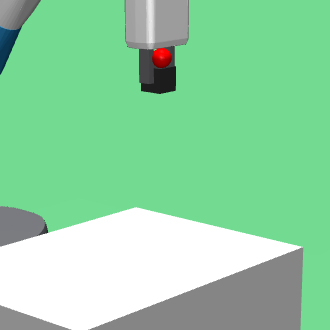} \put (71,78) {\small 16} \end{overpic} \hfill
\begin{overpic}[width=0.09\textwidth]{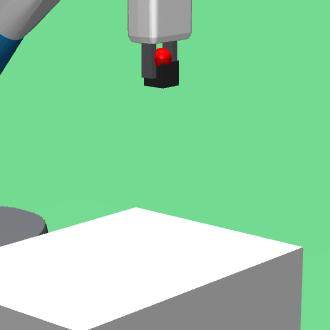} \put (71,78) {\small 18} \end{overpic} \hfill
\includegraphics[width=\textwidth]{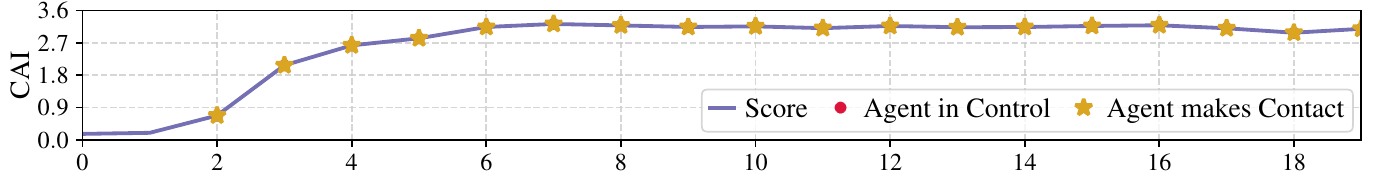}
  \caption{Visualizing CAI score $\CAI(s)$ over first 20 steps of an episode on \FPP. The red dots mark states where the agent has causal influence on the object (according to the ground truth label as described in \sec{app:env_fetch_pickandplace}). Yellow stars mark that the agent makes contact with the object between this state and the next state (as derived from the simulator), which also includes that the agent has causal influence.
  Plotted are episodes with no contact (first row), briefly touching the object (second row), and successful manipulation of the object (third row).}
  \label{fig:pnp_score_analysis}
  \hfill
\end{figure}

\section{Additional Results for Reinforcement Learning}
\label{app:add_results_rl}

\begin{figure}[htb]
    \begin{center}
    \includegraphics[width=0.33\textwidth]{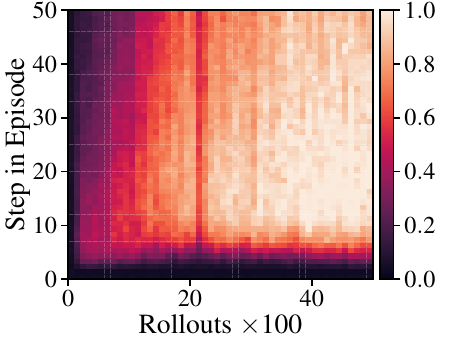}%
    \includegraphics[width=0.33\textwidth]{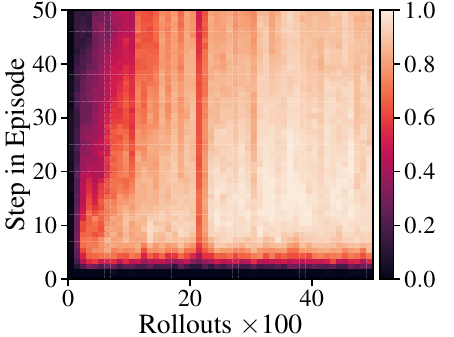}%
    \includegraphics[width=0.33\textwidth]{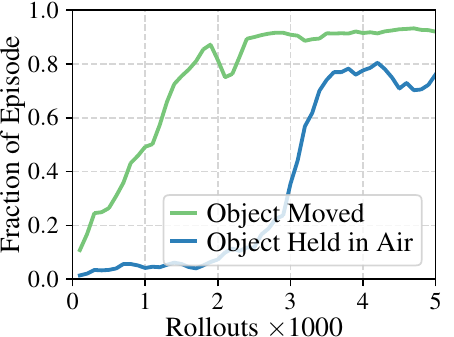}
    \caption{Analyzing the behavior of CAI. Plotted is one run of the intrinsic motivation experiment from \sec{subsec:app_exploration_bonus}.
    Left: heatmap showing score $\CAI(s)$ per step, as stored in replay buffer after \numprint{5000} episodes.
    Score is averaged in groups of 100 episodes and normalized by 95th percentile.
    Center: heatmap showing ground truth label derived for causal influence, as described in \sec{app:env_fetch_pickandplace}.
    CAI approximates the ground truth well.
    Right: behavior of the agent in this run.
    }
    \label{fig:analysis_intrinsic_mot}
    \end{center}
\end{figure}

In this section, we include additional results to the RL experiments in \sec{sec:rl_applications}.
First of, in \fig{fig:analysis_intrinsic_mot}, we analyse CAI's behavior using one the runs from the intrinsic motivation experiment in \sec{subsec:app_exploration_bonus}.
To this end, we plot a heatmap visualising the score distribution in the replay buffer after \numprint{5000} episodes, a corresponding heatmap with ground truth labels, and the fraction of steps where the object was moved or held in the air by the agent.
It can be seen that CAI's distribution approximates the ground truth label's distribution well.
It also becomes visible that CAI measures the \emph{strength of causal influence}, as the score is highest after the agent learns to lift the object in the air (episode \numprint{3000} onwards).
In comparison, the binary ground truth distribution assigns high scores more uniformly.

In \fig{fig:pnp_comparison_all}, we analyse the impact of combining our different proposed improvements, namely prioritization, exploration bonus, and active action selection.
On its own, prioritization brings the largest benefit.
Combining either exploration bonus or active action selection with prioritization leads to similar further improvements.
Both are complementary however; combining all three variants together results in the best performing version.
Compared to not using CAI, combining two or all three improvements leads to a $4$--$10\times$ increase in sample efficiency.

In \fig{fig:analysis_intrinsic_mot}, we plot different versions of the VIME baseline.
For VIME, there is the choice of which parts of the state space the information gain should be computed on.
We compared the variants of using the full state, only the position of the agent and the object, and only the position of the object (which would be similar to CAI).
The variant of using agent and object position performed best, and thus we use it for comparison in \fig{fig:exploration} in the main part.

Finally, in \fig{fig:analysis_ens_dis}, we plot different versions of the ensemble disagreement baseline. 
Similarly to VIME, we also have the option of choosing parts of the state space the disagreement should be computed on. 
We compared the variants of using the full state, only the position of the agent and the object, and only the position of the object (which would be similar to CAI).
The variant of using just object position performed best, and thus we use it for comparison in \fig{fig:exploration} in the main part.

\paragraph{Preliminary Experiments with Negative Results}

In preliminary work, we also experimented with other variants, which we list here for completeness.
For prioritization, we tested using a proportional distribution over episode scores (as in \cite{Schaul2016PER}) instead of the ranked distribution.
While the proportional distribution also worked, it resulted in slower learning and performance did not converge to 100\% success rate.
We also briefly tried shaping the ranking distribution by raising the score by a power before ranking (as in \cite{Schaul2016PER}), but this did not result in notable improvements, so we dropped this line for simplicity.
Further, \citet{Schaul2016PER} use importance sampling to correct for the bias in state sampling introduced by the prioritization.
We found this not to be necessary for the tasks we experimented with, but note that it could be required to converge for other environments.
Last, we also experimented with prioritizing states within an episode, and prioritized selection of goals for HER, but could not achieve any improvements from this.

For the exploration bonus, we tested adding a flat bonus when the score is over a certain threshold, and a bonus that interacts multiplicative with the reward.
Both variants performed worse than the additive bonus.
Linearly annealing the bonus to zero over the course of training did also not result in improvements.
For active action selection, we experimented with sampling actions according to a ranked or proportional distribution over the CAI score instead of taking the action with a maximum score.
Both versions performed worse than maximum score action selection.

\begin{figure}[htb]
  \centering
  \hfill
  \begin{minipage}[t]{.48\textwidth}
    \centering
    \includegraphics[width=0.9375\textwidth]{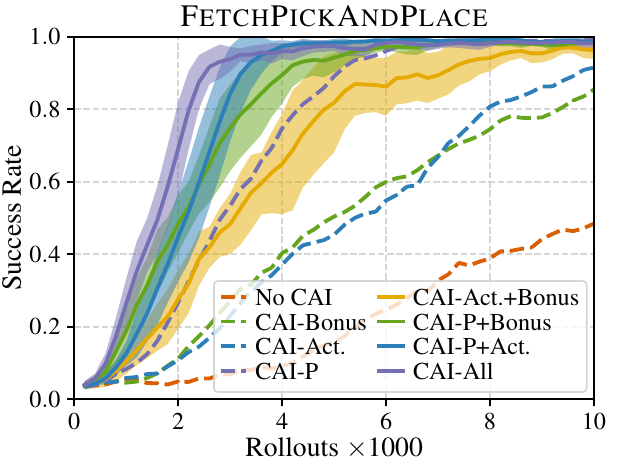}%
    \captionof{figure}{Performance when combining our proposed improvements, namely prioritization (P), exploration bonus (Bonus), active action selection (Act.), in \FPP.
    All proposed improvements act complementary and yield compounded increase in sample efficiency. For clarity, we omit the standard deviation of the single variant versions here, which can be found in the main part.}
    \label{fig:pnp_comparison_all}
  \end{minipage}
  \hfill
  \begin{minipage}[t]{.48\textwidth}
    \centering
    \includegraphics[width=0.9375\textwidth]{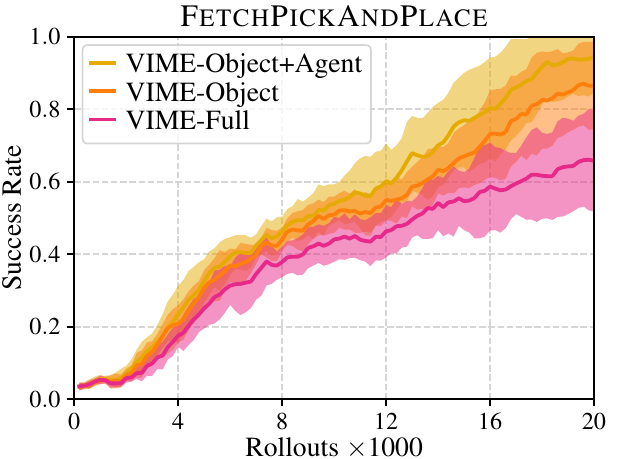}%
    \captionof{figure}{Comparing different variants of the VIME baseline~\cite{Houthooft2016VIME} for exploration in \FPP. The versions differ in the state components used for information gain. VIME-Object: object coordinates. VIME-Object+Agent: object and robotic gripper coordinates. VIME-Full: full state, including velocities and rotation state.}
    \label{fig:vime_baseline}
  \end{minipage}
  \hfill
\end{figure}

\begin{figure}[htb]
    \begin{center}
    \includegraphics[width=0.48\textwidth]{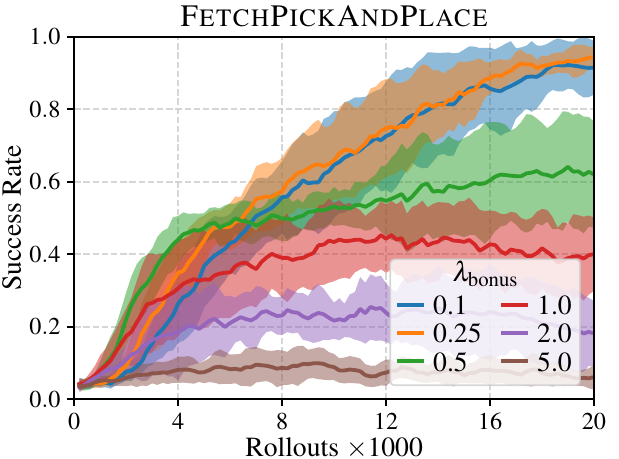}\hfill%
    \includegraphics[width=0.48\textwidth]{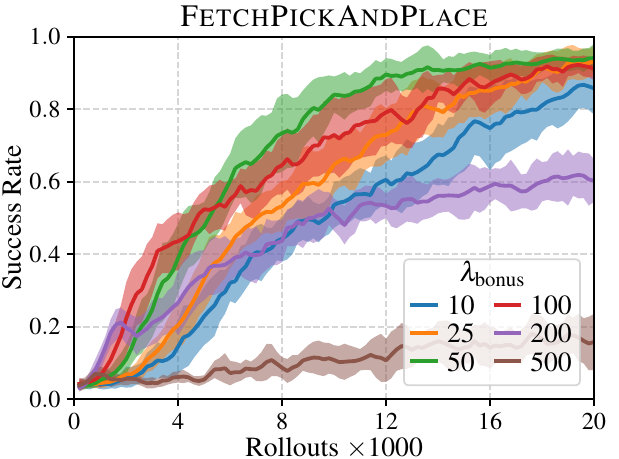}%
    \caption{Comparing different variants of the Ensemble Disagreement baseline~\cite{Pathak2019ExplorationDisagreement}. 
    The versions differ in the state components used for computing the disagreement, and the weight of the exploration bonus $\lambda_{\mathrm{bonus}}$. 
    Left: use full state, including velocities and rotation state, take bonus from time of collecting rollout.
    Right: use only object coordinates, recompute bonus after model updates.
    }
    \label{fig:analysis_ens_dis}
    \end{center}
\end{figure}

\section{Settings of Influence Detection Evaluation}
\label{app:setup_influence_eval}

\begin{center}
 Code is available under \url{https://github.com/martius-lab/cid-in-rl}.
\end{center}

\paragraph{Datasets}

For the experiment in \sec{sec:empirical_eval}, we use separate training, validation and test datasets.
For \Slide, training and validation set consist of \numprint{1000} episodes (\numprint{30000} samples) collected from two separate training runs of a DDPG+HER agent.
The test set consists of \numprint{4000} episodes (\numprint{120000} samples), where \numprint{2000} episodes were collected by a random policy, and \numprint{2000} by a DDPG+HER agent.
For \FPP, training and validation set consist of \numprint{5000} episodes (\numprint{250000} samples) collected from two separate training runs of a DDPG+HER agent.
The test set consists of \numprint{7500} episodes (\numprint{375000} samples) collected from training a DDPG+HER agent for \numprint{30000} epochs.
The test set was subsampled by selecting \numprint{250} random episodes from every \numprint{1000} collected episodes (subsampling was performed to ease the computational load of evaluation).
The test set has 45.3\% samples that are labeled positive, \ie the agent has influence.
We note that it is not strictly necessary to train on data collected from a training RL agent, as we also trained on data from random policies and obtained similar results.
In \sec{sec:empirical_eval}, we also analyse the behavior when adding observation noise, which is given as a percentage level.
To determine an appropriate level of noise, we recorded the standard deviations over the different state dimensions on a dataset with \numprint{10000} episodes collected from training a DDPG+HER agent.
We then add Gaussian noise to each dimension of the state with standard deviation equal to a percentage of the dataset standard deviation.

In \fig{fig:inf_eval_curves_rand}, we also give results for \FPP on a different test set.
This test set consists of \numprint{5000} episodes (\numprint{250000} samples) collected from a random policy.
It only has 3.3\% samples that are labeled positive, \ie there is considerably less interaction with the object than on the other test set.

\paragraph{Methods}

The classification scores for the different methods were obtained as follows. For CAI, we use $\hat{\CAI}(s)$ as given in \eqn{eq:cmi_approx}, with $K=64$ actions for sampling. For entropy, we use the same model as trained for CAI, and estimate conditional entropy as $H(S'_j \mid S=s) \approx \frac{1}{K} \sum_{i=1}^K H \g[ p(s_j' \mid s, a^{(i)}) \g]$ with $\{a^{(1)}, \ldots, a^{(K)}\} \stackrel{\text{iid}}{\sim} \pi$, using the formula given in \eqn{eqn:entropy_gaussian} for the entropy of a Gaussian.
For attention, we use the attention weights of a Transformer, where the score is computed as follows.
As the Transformer requires a set as input, we split the input vector into components for the agent state, the object state, and the action.
The Transformer is trained to predict the position of the object, that is, we discard the Transformer's output for the other components.
Then, letting $A_i$ denote the attention matrix of the $i$'th of $N$ layers, the total attention matrix is computed as $\prod_{i=1}^N A_i$.
The score is the entry of the total attention matrix corresponding to the input position of the action component, and the output position of the object component.
We refer to \citet{Pitis2020CFDataAug} for more details.

We list the model hyperparameters for CAI in \tab{table:eval_inf_settings_cai} and for the Transformer in \tab{table:eval_inf_settings_transformer}.
Training was performed for a maximum of \numprint{3000} epochs on \Slide and \numprint{2000} epochs on \FPP.
The training was stopped early when the mean-squared-error (MSE) did not improve for 10 evaluations on the validation set, where evaluations were conducted every 20 training epochs.
We trained all models using the Adam optimizer~\cite{Kingma2015Adam}, with $\beta_1=0.9$, $\beta_2=0.999$ if not noted otherwise.
All models were trained to predict the relative difference to the next state instead of the absolute next state, \ie the target was $S'_j-S_j$.
For \FPP, we rescaled the targets by a factor of $50$, as it resulted in better performance.

We used a simple multi-layer perceptron (MLP) for the model in CAI, with two separate output layers for mean and variance.
To constrain the variance to positive range, the variance output of the MLP was processed by a softplus function (given by $\log(1 + \exp(x))$), and a small positive constant of $10^{-8}$ was added to prevent instabilities near zero.
We also clip the variance to a maximum value of $200$.
For weight initialization, orthogonal initialization~\cite{Saxe2014OrthogonalInit} was used.
We observed that training with the Gaussian likelihood loss (\eqn{eq:log_likelihood}) can be unstable.
Applying spectral normalization~\cite{Miyato2018SpectralNorm} to some of the layers decreased the instability considerably.
Note that we did not apply spectral normalization to the mean output layer, as doing so resulted in worse predictive performance.
For \FPP, the inputs were normalized by applying batch normalization (with no learnable parameters) before the first layer.

\begin{table*}[htb!]
    \centering
    \caption{Settings for CAI on Influence Evaluation.}
    \label{table:eval_inf_settings_cai}
    \begin{subtable}[t]{.5\textwidth}
        \centering
        \caption{\Slide settings.}
        \begin{tabular}{@{}ll@{}}
        \toprule
        \textbf{Parameter} & \textbf{Value} \\
        \midrule
        Batch Size & $1000$ \\
        Learning Rate & $0.0003$ \\
        Network Size & $4 \times 128$ \\
        Activation Function & ReLU \\
        Spectral Norm on $\sigma$ & Yes \\
        Spectral Norm on Layers & Yes \\
        Normalize Input & No \\
        \bottomrule
        \end{tabular}
    \end{subtable}\hfill%
    \begin{subtable}[t]{.5\textwidth}
        \centering
        \caption{\FPP settings.}
        \begin{tabular}{@{}ll@{}}
        \toprule
        \textbf{Parameter} & \textbf{Value} \\
        \midrule
        Batch Size & $500$ \\
        Learning Rate & $0.0008$ \\
        Network Size & $3 \times 256$ \\
        Activation Function & ReLU \\
        Spectral Norm on $\sigma$ & Yes \\
        Spectral Norm on Layers & Yes \\
        Normalize Input & Yes \\
        \bottomrule
        \end{tabular}
    \end{subtable}
\end{table*}

\begin{table*}[htb!]
    \centering
    \caption{Settings for Transformer on Influence Evaluation.}
    \label{table:eval_inf_settings_transformer}
    \begin{subtable}[t]{.5\textwidth}
        \centering
        \caption{\Slide settings.}
        \begin{tabular}{@{}ll@{}}
        \toprule
        \textbf{Parameter} & \textbf{Value} \\
        \midrule
        Batch Size & $1000$ \\
        Learning Rate & $0.0003$ \\
        Embedding Dimension & $16$ \\
        FC Dimension & $32$ \\
        Number Attention Heads & $1$ \\
        Number Transformer Layers & $2$ \\
        Normalize Input & No \\
        \bottomrule
        \end{tabular}
    \end{subtable}\hfill%
    \begin{subtable}[t]{.5\textwidth}
        \centering
        \caption{\FPP settings.}
        \begin{tabular}{@{}ll@{}}
        \toprule
        \textbf{Parameter} & \textbf{Value} \\
        \midrule
        Batch Size & $500$ \\
        Learning Rate & $0.0001$ \\
        Embedding Dimension & $128$ \\
        FC Dimension & $128$ \\
        Number Attention Heads & $2$ \\
        Number Transformer Layers & $3$ \\
        Normalize Input & Yes \\
        \bottomrule
        \end{tabular}
    \end{subtable}
\end{table*}

\section{Settings of Reinforcement Learning Experiments}
\label{app:hyperparameters_rl}

Our RL experiments are run in the goal-conditioned setting, that is, each episode, a goal is created from the environment that the agent has to reach.
An episode counts as success when the goal is reached upon the last step of the episode.
On the \textsc{Fetch} environments we use, the goal are coordinates where the object has to be moved to.
For goal sampling, we use the settings as given by the environments in OpenAI Gym (described in more detail by~\citet{Plappert2018MultiGoalRL}).
We use the sparse reward setting, that is, the agent receives a reward of $0$ when the goal is reached, and $-1$ otherwise.
Practically, goal-conditioned RL is implemented by adding the current goal to the input for policy and value function.
For evaluating the RL experiments, we run the current RL agent $100$-times every $200$ episodes, and average the outcomes to obtain the success rate.

For the RL experiments, we use the same base settings for all algorithms, listed in \tab{table:rl_ddpg_hyperparams}.
These settings (for DDPG+HER) were borrowed from~\citet{Ren2019Exploration}, as they already provide excellent performance compared to the original settings from~\citet{Plappert2018MultiGoalRL} (\eg on \FPP, $4\times$ faster to reach 90\% success rate).
Before starting to train the agent, the memory buffer is seeded with $200$ episodes collected from a random policy.
The input for policy and value function is normalized by tracking mean and standard deviation of inputs seen over the course of training.

All experiments can be run on a standard multi-core CPU machine.
Training a CAI model with prioritization for \numprint{20000} epochs on \FPP takes roughly 8 hours using 3 cores on an Intel Xeon Gold 6154 CPU with 3 GHz.
Training a DDPG+HER agent takes roughly 4 hours.

\paragraph{CAI Implementation}
For CAI, we list the settings in \tab{table:rl_cai_hyperparams}.
After an episode is collected by the agent, the CAI score $\CAI$ is computed for the episode and stored in the replay buffer.
The model for CAI was trained every $100$ episodes by sampling batches from the replay buffer.
For online training, we designed a training schedule where the number of training batches used is varied over the course of training.
We note that the specific schedule used appeared to be not that important as long as the model is sufficiently trained initially.
We mostly chose our training schedule to make training computationally cheaper.
After every round of model training, the CAI score $\CAI$ is recomputed on the states stored in the replay buffer.
With larger buffer sizes, this can become time consuming.
We observed that it is also possible to fully recompute scores only every \numprint{1000} epochs, and otherwise recompute scores only on the newest \numprint{1000} epochs, at only a small loss of performance.

\paragraph{Exploration Bonus}
For the exploration bonus experiments, we clip the scores at specific values (reported in the settings tables under ``Maximum Bonus'') to restrict the influence of outliers before multiplying with $\lambda_\text{\text{bonus}}$ and addition to the task reward.
Moreover, the total reward is clipped to a maximum value of $0$ to prevent making states more attractive than reached-goal states.

\paragraph{Baselines}
We list specific hyperparameters for the baselines VIME~\cite{Houthooft2016VIME}, ensemble disagreement~\cite{Pathak2019ExplorationDisagreement}, and PER~\cite{Schaul2016PER} in \tab{table:rl_baselines_hyperparams}.
For VIME~\cite{Houthooft2016VIME}, we adapted a Pytorch port\footnote{Alec Tschantz. \url{https://github.com/alec-tschantz/vime}. MIT license.} of the official implementation\footnote{OpenAI. \url{https://github.com/openai/vime}. MIT license.}.
For PER, we used the hyperparameters as proposed in~\citet{Schaul2016PER} and implemented it ourselves.
For EBP~\cite{Zhao2018EBP}, we adapted the official implementation\footnote{Rui Zhao. \url{https://github.com/ruizhaogit/EnergyBasedPrioritization}. MIT license.}.

Ensemble disagreement~\cite{Pathak2019ExplorationDisagreement} was implemented by ourselves. 
We experimented with two variants.
The first variant (\emph{Full State + Direct Bonus} in \tab{table:rl_baselines_hyperparams}) is close to the original, that is, we use the full state prediction for the bonus computation, train the model every $10$ collected rollouts and compute the bonus for each state using the current model when collecting that state.
The second variant (\emph{Object Position + Recompute Bonus} in \tab{table:rl_baselines_hyperparams}) is closer to CAI, that is, we only use the position of the object for bonus computation, train the model every $100$ collected rollouts, and fully recompute the bonus for each collected state every \numprint{1000} epochs (see CAI implementation above).
For both variants, we use the same neural network architectures as CAI and train the models using the mean squared error loss.

\begin{table*}[htb]
    \centering
    \caption{Base settings for DDPG+HER. Also used for CAI, VIME, EBP, and PER.}
    \label{table:rl_ddpg_hyperparams}
    \begin{subtable}[t]{.5\textwidth}
        \centering
        \caption{General settings.}
        \begin{tabular}{@{}ll@{}}
        \toprule
        \textbf{Parameter} & \textbf{Value} \\
        \midrule
        Episode Length & $50$ \\
        Batch Size & $256$ \\
        Updates per Episode & $20$ \\
        Replay Buffer Warmup & $200$ \\
        Replay Buffer Size & \numprint{500000} \\
        Learning Rate & $0.001$ \\
        Discount Factor $\gamma$ & $0.98$ \\
        Polyak Averaging & $0.95$ \\
        Action $L_2$ Penalty & $1$ \\
        Action Noise & $0.2$ \\
        Random $\epsilon$-Exploration & $0.3$ \\
        Observation Clipping & $[-5, 5]$ \\
        Q-Target Clipping & $[-50, 0]$ \\
        Policy Network & $3 \times 256$ \\
        Q-Function Network & $3 \times 256$ \\
        Activation Function & ReLU \\
        Weight Initialization & Xavier Uniform~\cite{Glorot2010XavierInit} \\
        Normalize Input & Yes \\
        HER Replay Strategy & Future \\
        HER Replay Probability & $0.8$ \\
        \bottomrule
        \end{tabular}
    \end{subtable}%
    \begin{subtable}[t]{.5\textwidth}
        \centering
        \caption{Environment/task-specific settings.}
        \begin{tabular}{@{}ll@{}}
        \toprule
        \multicolumn{2}{c}{\FRT} \\
        \textbf{Parameter} & \textbf{Value} \\
        \midrule
        Learning Rate & $0.0003$ \\
        Discount Factor $\gamma$ & $0.95$ \\
        Polyak Averaging & $0.99$ \\
        \bottomrule
        \\
        \toprule
        \multicolumn{2}{c}{\FPP} \\
        \multicolumn{2}{c}{\emph{Intrinsic Motivation}} \\
        \textbf{Parameter} & \textbf{Value} \\
        \midrule
        Replay Buffer Warmup & $100$ \\
        Learning Rate & $0.003$ \\
        Discount Factor $\gamma$ & $0.95$ \\
        Polyak Averaging & $0.99$ \\
        Q-Function Network & $2 \times 192$ \\
        Q-Target Clipping & None \\
        HER Replay Strategy & No HER \\
        \bottomrule
        \end{tabular}
    \end{subtable}
\end{table*}

\begin{table*}[htb]
    \centering
    \caption{Settings for CAI in RL experiments.}
    \label{table:rl_cai_hyperparams}
    \begin{subtable}[t]{.5\textwidth}
        \centering
        \caption{General settings.}
        \begin{tabular}{@{}ll@{}}
        \toprule
        \textbf{Parameter} & \textbf{Value} \\
        \midrule
        Batch Size & $500$ \\
        Train Model Every & $100$ Episodes \\
        Training Schedule & \\
        \quad Initial ($200$ Episodes) & \numprint{40000} Batches \\
        \quad $\leq \numprint{5200}$ Episodes & \numprint{10000} Batches \\
        \quad $\leq \numprint{10200}$ Episodes & \numprint{5000} Batches \\
        \quad $> \numprint{10200}$ Episodes & No Training \\
        Adam $\beta_2$ & $0.9$ \\
        Learning Rate & $0.0008$ \\
        Network Size & $4 \times 256$ \\
        Activation Function & Tanh \\
        Spectral Norm on $\sigma$ & Yes \\
        Normalize Input & Yes \\
        CAI Number of Actions $K$ & 32 \\
        \bottomrule
        \end{tabular}
    \end{subtable}\hfill%
    \begin{subtable}[t]{.5\textwidth}
        \centering
        \caption{Environment/task-specific settings.}
        \begin{tabular}{@{}ll@{}}
        \toprule
        \multicolumn{2}{c}{\FRT} \\
        \textbf{Parameter} & \textbf{Value} \\
        \midrule
        Network Size & $4 \times 386$ \\
        \bottomrule
        \\[-0.6em]
        \toprule
        \multicolumn{2}{c}{\FPP} \\
        \multicolumn{2}{c}{\emph{Intrinsic Motivation}} \\
        \textbf{Parameter} & \textbf{Value} \\
        \midrule
        Training Schedule & \\
        \quad $\leq \numprint{5200}$ Episodes & \numprint{20000} Batches \\
        Maximum Bonus & 10 \\
        \bottomrule
        \\[-0.6em]
        \toprule
        \multicolumn{2}{c}{\emph{Exploration Bonus}} \\
        \textbf{Parameter} & \textbf{Value} \\
        \midrule
        Maximum Bonus & 2 \\
        \bottomrule
        \end{tabular}
    \end{subtable}
\end{table*}

\begin{table*}[htb]
    \centering
    \caption{Settings for Baselines in RL experiments.}
    \label{table:rl_baselines_hyperparams}
    \begin{subtable}[t]{.5\textwidth}
        \centering
        \vskip 0pt  %
        \begin{tabular}{@{}ll@{}}
        \toprule
        \multicolumn{2}{c}{\emph{VIME}~\cite{Houthooft2016VIME}} \\
        \textbf{Parameter} & \textbf{Value} \\
        \midrule
        Batch Size & $25$ \\
        Train Model Every & $1$ Episode \\
        Batches per Update & $100$ \\
        Learning Rate & $0.003$ \\
        Bayesian NN Size & $2 \times 64$ \\
        Number of ELBO Samples & 10 \\
        KL Term Weight  & $0.05$ \\
        Likelihood $\sigma$ & 0.5 \\
        Prior $\sigma$ & 0.1 \\
        Normalize Input & Yes \\
        Batch Size Info Gain & 10 \\
        Update Steps Info Gain & 10 \\
        Bonus Weight & 0.3 \\
        Maximum Bonus & 10 \\
        \bottomrule
        \end{tabular}
    \end{subtable}\hfill%
    \begin{subtable}[t]{.5\textwidth}
        \centering
        \vskip 0pt  %
        \begin{tabular}{@{}ll@{}}
        \toprule
        \multicolumn{2}{c}{\emph{Ensemble Disagreement}~\cite{Pathak2019ExplorationDisagreement}} \\
        \textbf{Parameter} & \textbf{Value} \\
        \midrule
        Number of Ensembles & $5$ \\
        Batch Size & $500$ \\
        Adam $\beta_2$ & $0.9$ \\
        Learning Rate & $0.0008$ \\
        Network Size & $4 \times 256$ \\
        Activation Function & Tanh \\
        Normalize Input & Yes \\
        \midrule
        \multicolumn{2}{l}{\emph{Full State + Direct Bonus}} \\
        Train Model Every & $10$ Episodes \\
        Train For & \numprint{500} Batches \\
        Maximum Bonus & 5 \\
        \midrule
        \multicolumn{2}{l}{\emph{Object Position + Recompute Bonus}} \\
        Train Model Every & $100$ Episodes \\
        Train For & \numprint{5000} Batches \\
        Maximum Bonus & 0.1 \\
        \bottomrule
        \\[-0.6em]
        \toprule
        \multicolumn{2}{c}{\emph{Prioritized Experience Replay}~\cite{Schaul2016PER}} \\
        \textbf{Parameter} & \textbf{Value} \\
        \midrule
        $\alpha$ & $0.6$ \\
        $\beta$ & Linearly Scheduled from $0.4$ \\
         &  to $1.0$ over \numprint{20000} Epochs \\
        \bottomrule
        \end{tabular}
    \end{subtable}
\end{table*}

\end{document}